\newcommand{\aModel}[0]{M}
\newcommand{\aFeature}[0]{x}
\newcommand{\aFeatureSet}[0]{X}
\newcommand{\suffReasons}[0]{SR}
\newcommand{\anEntity}[0]{e}
\newcommand{\aSufReason}[0]{S}
\newcommand{\entities}[1]{\texttt{ent(}#1\texttt{)}}
\newcommand{\aDecisionTree}[0]{T}
\newcommand{\dnf}[1]{\texttt{cnf}(#1)}
\newcommand{\aDomain}[0]{\mathcal{D}}
\newcommand{\maxvalue}[0]{M}
\newcommand{\minvalue}[0]{m}
\newcommand{\defProblem}[3]{%
\begin{center}
\fbox{\begin{minipage}{33em}
    \textsc{Problem}: #1
    
    \textsc{Input}: #2
    
    \textsc{Output}: #3
\end{minipage}}
\end{center}
}
\newcommand{\pNecessaryFeatures}[1]{\textsc{Necessary}_{#1}}
\newcommand{\pRelevantFeatures}[1]{\textsc{Relevant}_{#1}}
\newcommand{\pUsefulFeatures}[1]{\textsc{Useful}_{#1}}
\newcommand{\pEquiv}[1]{\textsc{Equiv}_{#1}}
\newcommand{\aModelClass}[0]{\mathcal{M}}
\newcommand{\decisionTreeClass}[0]{\textsc{DT}}
\newcommand{\booleanDecisionTreeClass}[0]{\decisionTreeClass{}_{B}}
\newcommand{\fbddClass}[0]{\textsc{FBDD}}
\newcommand{\obddClass}[0]{\textsc{OBDD}}
\newcommand{\ddnnfClass}[0]{\textsc{d-DNNF}}
\newcommand{\dnfClass}[0]{\textsc{DNF}}
\newcommand{\NPcomplete}[0]{\textsc{NP-complete}}
\newcommand{\NP}[0]{\textsc{NP}}
\newcommand{\NPhard}[0]{\textsc{NP-hard}}
\newcommand{\sharpPcomplete}[0]{\textsc{\#P-complete}}
\newcommand{\coNP}[0]{\textsc{coNP}}
\newcommand{\val}[0]{\texttt{value}}
\newcommand{\values}[0]{\texttt{values}}
\newcommand{\feature}[0]{\texttt{feat}}
\newcommand{\consistWith}[2]{\texttt{cw}(#1, #2)}
\newtheorem{example}{Example}
\newtheorem{theorem}{Theorem}
\theoremstyle{plain} 
\newtheorem{proposition}[theorem]{Proposition}
\newtheorem{lemma}[theorem]{Lemma}
\newtheorem{corollary}[theorem]{Corollary}
\newtheorem{definition}[theorem]{Definition}
\begin{document}

\title{Feature Relevancy, Necessity and Usefulness: Complexity and Algorithms}

\author{Tomás Capdevielle \\ tomas.capdevielle@gmail.com\\ Departamento de Computación, UBA, Argentina \And
Santiago Cifuentes\thanks{Corresponding author} \\ scifuentes@dc.uba.ar\\
ICC Conicet, UBA, Argentina}

\maketitle

\begin{abstract}
    Given a classification model and a prediction for some input, there are heuristic strategies for ranking features according to their importance in regard to the prediction. One common approach to this task is rooted in propositional logic and the notion of \textit{sufficient reason}. Through this concept, the categories of relevant and necessary features were proposed in order to identify the crucial aspects of the input. This paper improves the existing techniques and algorithms for deciding which are the relevant and/or necessary features, showing in particular that necessity can be detected efficiently in complex models such as neural networks. We also generalize the notion of relevancy and study associated problems. Moreover, we present a new global notion (i.e. that intends to explain whether a feature is important for the behavior of the model in general, not depending on a particular input) of \textit{usefulness} and prove that it is related to relevancy and necessity. Furthermore, we develop efficient algorithms for detecting it in decision trees and other more complex models, and experiment on three datasets to analyze its practical utility.
\end{abstract}

\section{Introduction}

Artificial Intelligence (AI) has rapidly become an integral part of our daily lives, with applications of AI and machine learning models becoming increasingly popular \cite{brynjolfsson2014second}. Thanks to the greater affordability of computing resources \cite{mccallum2023memory} and the availability of vast public data repositories \cite{taylor2024amount}, these models can now be trained more extensively, achieving levels of performance that were once unimaginable \cite{brynjolfsson2017can,lu2019artificial,ng2016artificial,shao2022tracing}.

On the one hand, these advancements in model performance have led to impressive developments in generative AI and more precise results in classification models. On the other hand, as both the power and popularity of AI models have grown, several concerns regarding their applications have also arisen due to multiple reasons including machine bias, catastrophic failures involving AI models, and ethics \cite{angwin2022machine,gunnin2019xai,ribeiro2016should,tjoa2020survey}.

As the complexity and depth of modern AI systems increases, the ability of humans to \textit{understand} and \textit{interpret} their behavior diminishes \cite{narodytska2018verifying,ruan2018reachability}. Thus, the emergent field of XAI (eXplainable AI) aims to develop techniques and heuristics to evaluate and explain the decisions and outputs of the models. Such tools are crucial in applications of a sensitive nature, such as AI-assisted medical diagnosis and credit scoring. Explaining AI-driven decisions is a key component in advancing toward the broad goal of an ethical use of AI, according to recent initiatives such as in \cite{goodman2017european,european2018coordinated,madiega2019eu}. Moreover, the use of XAI tools extends beyond enhancing the security of AI application: by providing explanations for the results obtained by these models, XAI enables a deeper understanding of their behavior, which can aid in detecting problems and identifying potential areas for improvement.

One way of providing insight into a prediction for a particular input consists in ranking features according to their importance. Most common \textit{feature ranking} methods are grounded in game theory, such as the Shapley values \cite{shapley1953value}, and therefore possess theoretical properties desirable for the explainability task. However, computing the exact Shapley values is usually computationally expensive \cite{marzouk2025computational,arenas2023complexity,van2022tractability}, which is why approximations, such as the SHAP score \cite{lundberg2017unified} are often employed in practice. These approximations might be practical, but they are less accurate \cite{fryer2021shapley} and do not maintain all the desirable theoretical properties. 

Both the Shapley values and its approximations have been pointed out for assigning non-zero scores to features that are irrelevant from a logical perspective, and similarly assign zero scores to relevant ones \cite{huang2023inadequacy}. \textit{Relevancy}, in the logical context, is defined upon the notion of sufficient reason or abductive explanation \cite{marques2023logic}. Intuitively, a set of features $S$ is a sufficient reason for some prediction if said features determine the prediction by themselves: that is, changing the other features' values would not change the model result, as long as the values from the features from $S$ remain unchanged. Then, relevant features are defined as those features present in some sufficient reason, while necessary ones are defined as those belonging to all of them \cite[]{audemard2021explanatory}.

Detecting relevancy is hard in general, and thus the problem is usually considered in the context of simple models such as decision trees or restricted binary decision diagrams \cite[]{audemard2020tractable, huang2023feature}. While one might argue that these families of classifiers are too simplistic to be useful in practical environments, we observe that: first, they provide a valuable starting point for studying the feasibility of logic-based explainability in a world increasingly dominated by black-box systems; and second, it is possible to \textit{compile} complex systems into these simpler models in an \textit{off-line} manner to then perform fast \textit{online} explainability queries \cite{darwiche2004new,marquis2015compile,de2020lower,huang2021efficient, bertossi2023compiling}, 

\paragraph{Our contributions} In this paper we extend previous results on the complexity of detecting feature relevancy and necessity, and moreover provide efficient algorithms to solve these problems for families of models where the complexity was unknown. In particular, we show that relevancy can be detected in decision trees with numerical features, extending the result for general trees with categorical features \cite{huang2021efficient}. For the case of necessity, we extend the tractability frontier showing that it can be detected in any binary decision model, and even provide linear time algorithms to compute all necessary features for decision trees and \fbddClass{}s. We also extend the notion relevancy in two ways, show that they both are \textsc{NP-complete} to compute for decision trees, and provide efficient algorithms for restricted cases.

Finally, we propose a global notion to decide which features are important for the model in general, which we call \textit{usefulness}. We show that it is related to the notions of relevancy and necessity, and that the complexity of detecting it is related to the problem of deciding whether two models are equivalent. We also define a scoring notion based on it, and provide a general algorithm for computing it. In particular, the proposed procedure runs in quadratic time for decision trees. We compute this score for three different datasets and show that the ranking it induces is consistent with other scoring schemes.

\paragraph{Related work} The problem of computing relevant and necessary features was introduced in \cite{audemard2021explanatory} and solved for the case of binary decision trees with boolean features (i.e. decision trees classifying each input into one of two classes, where each feature of the input can be either 0 or 1). In \cite{huang2021efficiently} these results were extended for decision trees with generic categorical features and an arbitrary number of classes. More recently, in \cite{huang2023feature} hardness for detecting relevancy was shown for \fbddClass{}s, and polynomial time algorithms were developed to compute the set of necessary features for families as expressive as \ddnnfClass{} circuits. In \cite{darwiche2022computation} a decision tree model admitting both categorical and numerical features similar to ours was considered, but they did not study the problem of relevancy and necessity.

Other problems related to computing sufficient reasons were considered in the literature. In particular, \cite{audemard2023computing} studied these problems for boosted trees, and \cite{carbonnel2023tractable} for multivariate decision trees. In \cite{marques2023logic} the complexity of enumerating prime implicants of \ddnnfClass{}s was studied. \cite{izza2020explaining} performed a more experimental study on computing these explanations for decision trees, comparing it with the ``direct reason'' given by the classification path. Both \cite{audemard2023computing} and \cite{huang2021efficiently} consider the problem of computing contrastive explanations.

We observe that \cite{bertossi2020causality} proposed a scoring schema based on the notion of \textit{counterfactual cause} \cite{halpern2005causes} which is somewhat related to our notion of usefulness. As we will see, the usefulness of a feature $x$ is related to the number of entities that have a counterfactual cause on $x$.

\paragraph{Organization} In Section~\ref{sec:definitions} we present all definitions that we will use in the rest of the work, and in particular we present our decision tree model admitting both categorical and numerical features. In Section~\ref{sec:auxiliary_results} we present some auxiliary results related to hitting sets that we need for Section~\ref{sec:main_results}, where we show our main results and algorithms. In Section~\ref{sec:experiments} we experiment with three datasets to analyze the practical utility of our scoring proposal. Finally, in Section~\ref{sec:conclusions} we state some conclusions and propose future lines of work.

\section{Definitions}\label{sec:definitions}

When considering decision trees, we will work with categorical and numerical features, in a similar manner to \cite{darwiche2022computation}, but treating numerical features as first-class citizens: some of our results can be seen as generalizations of the well-known hitting set dualization results \cite{liffiton2008algorithms,reiter1987theory} already considered in \cite{huang2021efficiently,darwiche2022computation} to handle general categorical and numerical features. In addition, all the models that we consider will classify each entity into one of finite categories.

Let $\aFeatureSet$ be some finite set of elements, that we will consider as a set of features, and that we can partition as $\aFeatureSet = \aFeatureSet_C \cup \aFeatureSet_N$, where $\aFeatureSet_C$ and $\aFeatureSet_N$   denote the set of categorical and numerical features, respectively. For each categorical feature $x \in \aFeatureSet_C$ we consider given some domain $\aDomain_x$ such that $|\aDomain_x| < \infty$. We say that a categorical feature is binary if $|\aDomain_x| = 2$. Meanwhile, for each numerical feature $x \in \aFeatureSet_N$ we consider given some domain $\aDomain_x = [\minvalue_x,\maxvalue_x]$, where $\minvalue{}_x \in \mathbb{R} \cup \{-\infty\}$ denotes the smallest value that the feature can take, while $\maxvalue{}_x \in \mathbb{R} \cup \{\infty\}$ denotes the biggest one. We define the set of entities over $\aFeatureSet$ as $\entities{\aFeatureSet} = \{f: \aFeatureSet \to \bigcup_{x \in \aFeatureSet} \aDomain_x : f(x) \in \aDomain_x \, \forall x \in \aFeatureSet\}$. Given an entity $\anEntity \in \entities{\aFeatureSet}$ and some feature $\aFeature \in \aFeatureSet$ the expression $\anEntity(\aFeature)$ indicates the value that feature $x$ has for entity $e$. We say that a model is binary if it only uses binary categorical features.

A $k$-class model $\aModel$ over $\aFeatureSet$ is a mapping from \entities{$\aFeatureSet$} to $\{0,1,\ldots,k-1\}$\footnote{Whenever the context makes it clear, we will omit specifying the feature set over which the model or the entities are defined}. Given an entity $\anEntity \in \entities{X}$ the value $\aModel(\anEntity)$ indicates the class to which $\anEntity$ belongs according to $\aModel$. Given two models $M_1$ and $M_2$ we use $M_1 \equiv M_2$ to indicate that $M_1(e) = M_2(e)$ for all entities $\anEntity \in \entities{\aFeatureSet}$. We refer to $2$-class models as \textit{Boolean} classifiers, and we say that a boolean model $M$ \textit{accepts} an entity $e$ if $M(e) = 1$ and \textit{rejects} it otherwise.

Given some subset $S \subseteq \aFeatureSet$ of features and some entity $e \in \entities{\aFeatureSet}$ we define the set of features consistent with $e$ on $S$ as $\consistWith{e}{S} = \{e' \in \entities{\aFeatureSet} : e'(x) = e(x) \, \forall x \in S\}$. Given some entity $\anEntity$ we denote as $\anEntity_{x = b}$ with $x \in \aFeatureSet$ and $b \in \aDomain_{x}$ the unique entity satisfying $\anEntity(y) = \anEntity_{x=b}(y)$ for all $y \in \aFeatureSet \setminus \{\aFeature\}$, and $\anEntity_{x = b}(x) = b$. In a similar fashion, given a model $M$ over $\aFeatureSet$, a feature $x \in \aFeatureSet$ and $b \in \aDomain_x$ we define $M_{x=b}$ as the model satisfying $M_{x=b}(e) = M(e_{x=b})$.

A literal $l$ over features $X$ is an expression of one of the following forms:

\begin{enumerate}
    \item If $x \in \aFeatureSet_C$, then $l = x \, op \, D$ where $op \in \{\in, \notin\}$ and $D \subseteq \aDomain_x$.

    \item If $x \in \aFeatureSet_N$, then $l = x \, op \, b$ where $op \in \{\leq, >\}$ and $b \in \aDomain_x$.
\end{enumerate}

We denote the negation of a literal $l$ as $\lnot l$ and define it in the usual manner by flipping the operator involved in the literal (i.e. $\in$ changes to $\notin$, $\leq$ to $>$, etc). 

A \textit{term} is a conjunction of literals, and a \textit{clause} is a disjunction of literals. The size of a term or a clause is defined as its number of literals. A \textsc{DNF} formula is a disjunction of terms, and a \textsc{CNF} formula is a conjunction of clauses. The size of a \textsc{DNF}
or \textsc{CNF} formula is defined as the sum of the sizes of its terms or clauses, respectively. Any term $l = x \, op \, v$ can be understood as a boolean classifier: given any entity $e \in \entities{\aFeatureSet}$ we consider that $l(e) = 1 \iff e(x) \, op \, v$. This can be naturally extended to terms and clauses, and furthermore to \textsc{DNF} and \textsc{CNF} formulas. Whenever the feature $x$ is binary, we will write $x$ to denote the literal $x \in \{1\}$ and $\overline{x}$ to refer to $x \in \{0\}$.

\begin{example}\label{example:CNF}
    The formula 
    \begin{align*}
        \varphi(x_1,x_2, x_3, x_4, x_5) = (x_1 \vee \overline{x_2} \vee x_5) \wedge (x_2 \vee x_3 \vee x_4) \wedge (\overline{x_2} \vee x_4 \vee \overline{x_5}) \wedge (\overline{x_1} \vee \overline{x_2} \vee x_5)
    \end{align*}
    is a \textsc{CNF} formula, which can be understood as a boolean model with binary features. Given the entity $e= \{x_1: 0, x_2 : 0, x_3: 1, x_4: 1, x_5: 0\}$, it holds that $\varphi(e) = 1$.
\end{example}

We say that a boolean class of models $\aModelClass$ is closed under

\begin{itemize}
    \item \textit{Conditioning}, if whenever $M \in \aModelClass$, it is the case that $M_{x=b} \in \aModelClass$ for every $x \in \aFeatureSet$ and $b \in \aDomain_x$, and $M_{x=b}$ can be computed from $M$, $x$ and $b$ in polynomial time.

    \item \textit{Disjoint disjunction}, if whenever $M_1,M_2 \in \aModelClass$, it is the case that $M = (M_1 \wedge x) \vee (M_2 \wedge \overline{x}) \in \aModelClass$, where $x$ is a fresh binary categorical feature not used by neither $M_1$ or $M_2$, and $M$ can be computed efficiently given $M_1$ and $M_2$.

    \item \textit{Negation}, if whenever $M \in \aModelClass$ it holds that $\lnot M \equiv 1 - M\in \aModelClass$ and $\lnot \aModel$ can be computed efficiently given $\aModel$.

    \item \textit{Conjunction}, if whenever $M_1,M_2 \in \aModelClass$ it holds that $M=M_1 \wedge M_2 \in \aModelClass$ and $M$ can be computed efficiently given $M_1$ and $M_2$.
\end{itemize}

In this work, we will be interested in \textit{sufficient reasons}:

\begin{definition}[Sufficient Reason \cite{marques2023logic}]\label{def-ax}
    Given a model $\aModel$ and an entity $\anEntity$ we define the set of reasons for the prediction $M(e)$ as

    \begin{align*}
        R(M, e) = \left\{ 
        S \subseteq \aFeatureSet : M(e') = M(e) \, \forall e' \in \consistWith{e}{S} \right\}
    \end{align*}

    A sufficient reason is a reason that is minimal with respect to the property of being a reason. Thus, we define the set of sufficient reasons as

    \begin{align*}
        \suffReasons{}(M, e) = \{S \subseteq X : S \in R(M, e), \forall S' \subset S, S' \notin R(M, e)\}
    \end{align*}
\end{definition}

Sufficient reasons are minimal subsets of features such that, if their values are preserved as stated by $e$, then the behavior of the model $M$ does not change, no matter the modifications that other features can suffer.

\begin{example}\label{example:suf_reasom}

    Consider the model $\varphi$ and the entity $e$ from Example~\ref{example:CNF}. Then, it can be checked that both $\{x_2, x_3\}$ and $\{x_2, x_4\}$ are sufficient reasons. Moreover, it can be shown that they are the only sufficient reasons, and thus  $\suffReasons{}(\varphi, e) = \{\{x_2, x_3\}, \{x_2, x_4\}\}$.
    
\end{example}

A model may have many sufficient reasons (potentially an exponential number relative to its size\cite{audemard2021explanatory}), and thus any individual reason might not be useful for the explanatory task. Nevertheless, we can distinguish important features by considering those that belong to some or all sufficient reasons. These correspond to the notions of relevant and necessary features, respectively.

\begin{definition}[Relevant feature]\label{def-rf}
Given a model $\aModel$, a feature $\aFeature \in \aFeatureSet$ and an entity $\anEntity \in \entities{\aFeatureSet}$ over $\aModel$ we say that $\aFeature$ is relevant for the prediction $\aModel(\anEntity)$ if there is a sufficient reason $\aSufReason \in \suffReasons{}(\aModel, \anEntity)$ such that $\aFeature \in \aSufReason$.
\end{definition}

\begin{definition}[Necessary feature]\label{def-nf}

Given a model $\aModel$, a feature $\aFeature \in \aFeatureSet$ and an entity $\anEntity \in \entities{\aFeatureSet}$ over $\aModel$ we say that $\aFeature$ is necessary for the prediction $\aModel(\anEntity)$ if for all sufficient reasons $\aSufReason \in \suffReasons{}(M, \anEntity)$ it is the case that $\aFeature \in \aSufReason$.

\end{definition}

The two previous notions are local, i.e. relevant and necessary features can be used to understand the behavior of model $M$ regarding the prediction for some particular entity $\anEntity$. We introduce a more global notion intended to capture those features that the model uses to distinguish at least two entities.

\begin{definition}[Useful feature]
    Given a model $\aModel$ and a feature $\aFeature \in \aFeatureSet$ we say that $\aFeature$ is useful if there is some $e \in \entities{\aFeatureSet}$ and $b \in \aDomain_x$ such that $M(\anEntity) \neq \aModel(\anEntity_{x = b})$.
\end{definition}

Intuitively, a feature that is not useful is not required by the model because there is not a single entity whose classification depends on the value of that feature. This intuition is formally correct, since if $x$ is not useful for model $M$ then $M \equiv M_{x=b}$  for any $b \in \aDomain_x$. We will show that the notion of usefulness is related to relevancy and necessity (see Proposition~\ref{prop:useful_features_are_necessary_for_someone_and_also_relevant}). We will also propose counting the number of entities for which $x$ is useful to assign a value (a \textit{score}) to the feature $x$ in order to rank feature importance. 

\begin{example}\label{example:rel_nec_use}
    Consider again the model $\varphi$ and the entity $e$ from Example~\ref{example:CNF}. Observing the sufficient reasons already computed in Example~\ref{example:suf_reasom}, it holds that $x_2$ is a necessary feature for $\varphi(e)$ while $x_3$ and $x_4$ are relevant.

    Furthermore, $x_1$ is not useful for $\varphi$ (note that the first clause can be simplified with the last one), while the other features are all useful.
\end{example}

We will now present all the families of models that we will be using in our results and algorithms, which have been used as compilation languages for more complex models \cite{darwiche2002knowledge} and are common in the XAI literature \cite{arenas2023complexity,barcelo2020model,huang2022tractable}. The simplest family is the one corresponding to decision trees. Our definition is somewhat non-standard because we allow for numerical and categorical features together. 

\begin{definition}[Decision tree]

A $k$-class decision tree over features $\aFeatureSet$ is a binary tree $\aDecisionTree$ (i.e. each node except for the leaves has two children), each of whose internal nodes is labeled with an element from $\{(x, D) : x \in \aFeatureSet_C, D \subseteq \aDomain_x\}$ (in which case we call it a categorical node) or rather from $\{(x, b) : x\in \aFeatureSet_N, b \in \aDomain_{x}\}$ (in which case it is a numerical node); and leaves are labeled with a class in the range $\{0,\ldots, k-1\}$. We assume the set of children of each internal node $v$ is represented by an ordered list $\{w_1, w_2\}$, and therefore we can refer to $w_1$ as the ``left'' child and to $w_2$ as the ``right'' child. The size of a decision tree $\aDecisionTree$ is defined as its number of nodes, and noted as $|\aDecisionTree|$. 

Given any categorical node $v$ we denote as $\feature{}(v) \in \aFeatureSet_C$ and $\values{}(v) \subseteq \aDomain_{\feature{}(v)}$ the feature and set of values corresponding to its label, respectively. We associate a literal $l(v)$ to each categorical node $v$ defined as $l(v) \equiv \feature{}(v) \in \values{}(v)$. Given any numerical node $v$ in $\aDecisionTree$, we denote as $\feature{}(v) \in \aFeatureSet_N$ and $\val{}(v) \in \aDomain_{\feature(v)}$ the feature and value of its label, respectively. We associate a literal $l(v)$ to each numerical node $v$ defined as $l(v) = \feature{}(v) \leq \val{}(v)$

Given $\anEntity \in \entities{\aFeatureSet}$ we denote the prediction of $\aDecisionTree$ for $\anEntity$ as $\aDecisionTree(\anEntity)$, and define it as the label of the leaf reachable from the root using the following strategy: when on a internal node $v$ move to the left child if $l(v)(e) = 1$, and move to the right child otherwise.
    
\end{definition}

See Figure~\ref{fig:decision-tree-example} for an example of a decision tree. Without loss of generality, we assume that there are no categorical nodes with labels of the form $(x, \emptyset)$ or $(x, \aDomain_x)$, and similarly that there are no numerical nodes with labels of the form $(x, M_x)$. If $v$ is a node from $\aDecisionTree$, we refer by $T_v$ to the decision (sub-)tree rooted at $v$. We denote the class of decision trees as $\decisionTreeClass$, while we denote the class of boolean binary decision trees as $\booleanDecisionTreeClass$.

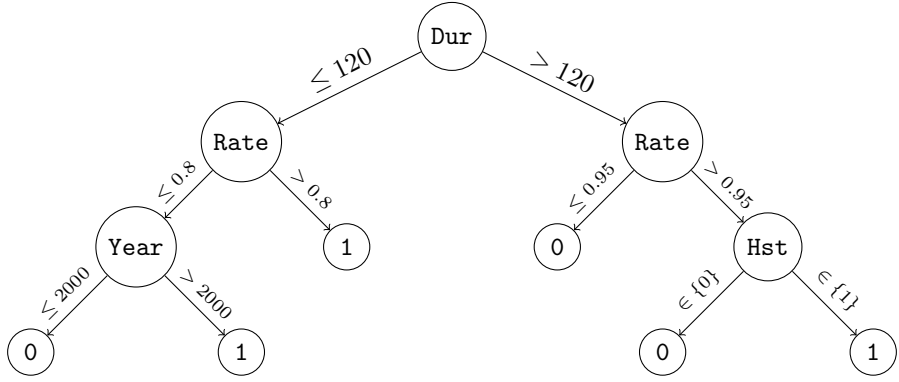
\begin{figure}
    \centering
    \begin{tikzpicture}[scale=0.7]
        \node (Dur) at (0,0) [draw, circle] {\texttt{Dur}};
        \node (Rate1) at (-4, -2) [draw, circle] {\texttt{Rate}};
        \node (Rate2) at (4, -2) [draw, circle] {\texttt{Rate}};
        \node (accept1) at (-2, -4) [draw, circle] {\texttt{1}};
        \node (reject1) at (2, -4) [draw, circle] {\texttt{0}};
        \node (year) at (-6, -4) [draw, circle] {\texttt{Year}};
        \node (hist) at (6, -4) [draw, circle] {\texttt{Hst}};
        \node (accept2) at (8, -6) [draw, circle] {\texttt{1}};
        \node (reject2) at (4, -6) [draw, circle] {\texttt{0}};
        \node (reject3) at (-8, -6) [draw, circle] {\texttt{0}};
        \node (accept3) at (-4, -6) [draw, circle] {\texttt{1}};
    
        \draw[->] (Dur) -- (Rate1) node[pos=0.5, sloped, above] {$\leq 120$};
        \draw[->] (Dur) -- (Rate2) node[pos=0.5, sloped, above] {$> 120$};
        \draw[->] (Rate1) -- (year) node[pos=0.5, sloped, above] {\scriptsize $\leq 0.8$};
        \draw[->] (Rate1) -- (accept1) node[pos=0.5, sloped, above] {\scriptsize $> 0.8$};
        \draw[->] (year) -- (reject3) node[pos=0.5, sloped, above] {\scriptsize $\leq 2000$};
        \draw[->] (year) -- (accept3) node[pos=0.5, sloped, above] {\scriptsize $> 2000$};
        \draw[->] (Rate2) -- (reject1) node[pos=0.5, sloped, above] {\scriptsize $\leq 0.95$};
        \draw[->] (Rate2) -- (hist) node[pos=0.5, sloped, above] {\scriptsize $> 0.95$};
        \draw[->] (hist) -- (reject2) node[pos=0.5, sloped, above] {\scriptsize $\in \{0\}$};
        \draw[->] (hist) -- (accept2) node[pos=0.5, sloped, above] {\scriptsize $\in \{1\}$};
    \end{tikzpicture}
    \caption{Example of a decision tree representing a recommendation system for a film database. The set of features is \texttt{Dur} (duration), \texttt{Rate}, \texttt{Year} and \texttt{Hst} (whether the film is of the historical genre). The domains are $\aDomain_{\texttt{Dur}} = [0, \infty]$, $\aDomain_{\texttt{Rate}} = [0,1]$, $\aDomain_{\texttt{Year}} = [1888, \infty]$ and $\aDomain_{\texttt{Hst}} = \{0, 1\}$. Note that the first three features are numerical, while the last one is categorical. The tree classifies entity $e = \{\texttt{Dur}:90, \texttt{Rate}:0.85, \texttt{Year}:2005, \texttt{Hst}: 0\}$ to $1$. Moreover, the sufficient reasons for the result are $\{\texttt{Dur}, \texttt{Rate}\}$ and $\{\texttt{Dur}, \texttt{Year}\}$, and therefore $\texttt{Dur}$ is a necessary feature while $\texttt{Rate}$ and $\texttt{Year}$ are both relevant.}
    \label{fig:decision-tree-example}
\end{figure}

One of our algorithms will involve \fbddClass{}s, and thus we lay out some definitions.

\begin{definition}[\fbddClass{} (\textit{Free Binary Decision Diagram})]
    A \textit{decision diagram} is a boolean binary model given by a directed acyclic graph $D$ with a unique node with an in-degree of 0 identified as the root, and such that each node has an out-degree equal to either 2 (the internal nodes) or 0 (the leaves). Each internal node $v$ has its set of reachable nodes ordered as $N(v) = \{w_1, w_2\}$ and thus we can refer to the left and right child of $v$ as $w_1 = \texttt{left}(v)$ and $w_2 = \texttt{right}(v)$. Each internal node $v$ has an associated feature $\feature{}(v)$, and each leaf has an associated label $0$ or $1$. 

    Given an entity $e \in \entities{X}$ (where all features are binary) we denote the prediction of $D$ for $e$ as $D(e)$, and define it as the label of the leaf reachable from the root using the following strategy: when on an internal node $v$ move to $\texttt{left}(v)$ if $e(\feature{}(v)) = 0$, and move to $\texttt{right}(v)$ otherwise.

    A decision diagram $D$ is an \fbddClass{} if every directed path $P = v_0, \ldots, v_k$ in $D$ satisfies that $\feature{}(v_i) \neq \feature{}(v_j)$ for $0 \leq i < j \leq k$ (i.e. features are not repeated in any path). This is usually referred to as the \textit{read-once property}.

\end{definition}

See Figure~\ref{fig:fbdd-example} for an example of an \fbddClass{}. If $D$ is an \fbddClass{} and $v$ is a node from $D$, we use $D_v$ to denote the \fbddClass{} obtained by taking $v$ as the root of $D$.

\begin{figure}
    \centering
    
    \begin{tikzpicture}
        \node (x2) at (0,0) [draw, circle] {$x_2$};
        \node (x3) at (-2,-2) [draw, circle] {$x_3$};
        \node (x5) at (2,-2) [draw, circle] {$x_5$};
        \node (x4) at (-2,-4) [draw, circle] {$x_4$};
        \node (0) at (-2,-6) [draw, circle] {0};
        \node (1) at (2,-6) [draw, circle] {1};
    
        \draw[->] (x2) -- (x3) node[pos=0.5, sloped, above] {$0$};
        \draw[->] (x2) -- (x5) node[pos=0.5, sloped, above] {$1$};
        \draw[->] (x3) -- (x4) node[pos=0.5, above left] {$0$};
        \draw[->] (x3) -- (1) node[pos=0.5, sloped, above] {$1$};
        \draw[->] (x4) -- (0) node[pos=0.5, above left] {$0$};
        \draw[->] (x4) -- (1) node[pos=0.5, sloped, above] {$1$};
        \draw[->] (x5) -- (x4) node[pos=0.5, sloped, above] {$0$};
        \draw[->] (x5) -- (1) node[pos=0.5, above left] {$1$};
    \end{tikzpicture}
        \caption{An \fbddClass{} representing the CNF formula from Example~\ref{example:CNF}. Note that each directed path does not contain two nodes with the same feature.}
        \label{fig:fbdd-example}
        
\end{figure}
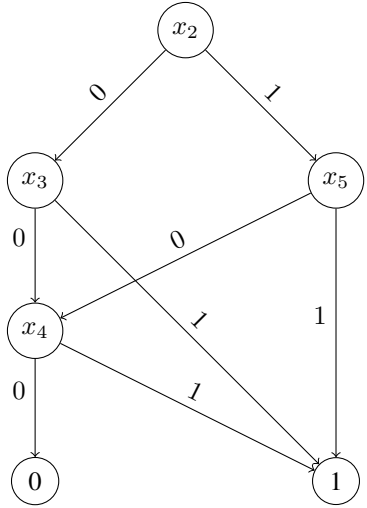

The other models we mention are standard and their particular details are not needed for the development of this work. These encompass \obddClass{}s (Ordered Binary Decision Diagrams), \ddnnfClass{}s and general \dnfClass{}s \cite{darwiche2002knowledge}. All of them are understood as particular cases of boolean binary models. Moreover, all model classes mentioned here are closed under conditioning and disjoint disjunction, all binary diagrams are closed under negation, and trees are also closed under conjunction. 

We recall that any decision tree can be represented as a CNF formula of small size.

\begin{definition}[CNF of a boolean decision tree]
    Let $T$ be a boolean decision tree. We define its set of paths as $P_{\aDecisionTree} = \{v_0, v_1 ,\ldots, v_k :$ $v_0$ is the root of $T$, $v_k$ is a leaf and $v_{i+1}$ is a child of $v_{i}$ for $0 \leq i < k\}$. Given a path $v_0,\ldots,v_k = p \in P_{\aDecisionTree}$ we associate a term $F_p$ to $p$ as

    \begin{align*}
        F_p &= \bigwedge_{\substack{i=0 \\ v_{i+1} \text{ is left child of } v_i}}^k l(v_i) \,\,\wedge \bigwedge_{\substack{i=0 \\ v_{i+1} \text{ is right child of } v_i}}^k \lnot l(v_i) 
    \end{align*}

    and likewise a clause $C_p = \lnot F_p$ (where negation propagates following De Morgan laws).

    Let $P_{\aDecisionTree}^0 \subset P_{\aDecisionTree}$ be the subset of paths of $\aDecisionTree$ that end at a node with label $0$. Then, it holds that $\aDecisionTree \equiv \lnot \bigvee_{p \in P_{\aDecisionTree}^-} F_p = \bigwedge_{p \in P_{\aDecisionTree}^-} C_p$. We denote the boolean formula $\bigwedge_{p \in P_T^-} C_p$ as $\dnf{\aDecisionTree}$. It holds that $|\dnf{\aDecisionTree}| = O(|\aFeatureSet||\aDecisionTree|)$.
\end{definition}


    



Even though any decision tree is represented by a succinct (i.e. with at most a polynomial overhead in size) CNF formula, it is not the case that any CNF formula can be represented succinctly by a decision tree. Nevertheless, it is a well known fact that any model with a small set of positive entities can be represented by a decision tree:

\begin{lemma}[Decision trees represent sparse models]\label{lemma:trees_sparse_models}
   Let $E \subseteq \entities{\aFeatureSet}$ be a subset of entities over features $\aFeatureSet$, where all features from $X$ are categorical. Then, it is possible to build a boolean decision tree $\aDecisionTree$ such that $\aDecisionTree(\anEntity) = 1$ if and only if $\anEntity \in E$. This construction takes time $O(|E||\aFeatureSet|)$, and the resulting decision tree has size $O(|E||X|)$.
\end{lemma}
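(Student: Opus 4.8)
The plan is to build $\aDecisionTree$ as a \emph{trie}: fix an arbitrary order $x_1,\dots,x_n$ on the features of $\aFeatureSet$ (all of which are categorical) and branch on one feature at a time, simulating a multiway branch on the finitely many values of a feature that actually occur among the surviving entities by a short \emph{chain} of binary tests of the form $x \in \{v\}$. Since $|E|$ entities can exhibit at most $|E|$ distinct values of any given feature, each such chain is short, and the chains on disjoint buckets of entities will telescope to give the claimed bound.

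Concretely, I would define a recursive procedure $\texttt{build}(E', F)$ taking a subset $E' \subseteq E$ and a suffix $F = \{x_j, \dots, x_n\}$ of the feature order, maintaining the invariant that all entities of $E'$ agree with a common partial assignment on $\aFeatureSet \setminus F$ and that $E'$ is \emph{exactly} the set of entities of $E$ consistent with that assignment. If $E' = \emptyset$, return a single leaf labelled $0$. If $F = \emptyset$, then $|E'| \le 1$ (two entities agreeing on all of $\aFeatureSet$ are equal), and we return a leaf labelled $1$ when $E' \neq \emptyset$ and $0$ otherwise. Otherwise let $x$ be the first feature of $F$ and $v_1,\dots,v_m$ the distinct values of $x$ over $E'$ (so $m \le |E'|$); create categorical nodes $u_1,\dots,u_m$ with $u_i$ labelled $(x,\{v_i\})$, set $\texttt{right}(u_i)=u_{i+1}$ for $i<m$ and $\texttt{right}(u_m)$ to a leaf labelled $0$, and set $\texttt{left}(u_i)=\texttt{build}(E'_i, F\setminus\{x\})$ where $E'_i=\{e'\in E' : e'(x)=v_i\}$. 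The tree is $\texttt{build}(E, \aFeatureSet)$. (We may assume $|\aDomain_x|\ge 2$ for every $x$, as a feature with singleton domain is trivial and can be dropped, so the labels $(x,\{v_i\})$ are legitimate.)

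Correctness is an induction on $|F|$ using the invariant. An entity $\anEntity$ reaching $u_1$ is consistent with the fixed assignment (every left-step on the path to $u_1$ fixed and verified one more feature value), so $\anEntity \in E \iff \anEntity \in E'$; if $\anEntity(x)=v_i$ then $\anEntity$ descends into $\texttt{build}(E'_i, F\setminus\{x\})$, whose invariant now records exactly the entities of $E$ agreeing with $\anEntity$ on the enlarged fixed set, and the inductive hypothesis applies; if $\anEntity(x)\notin\{v_1,\dots,v_m\}$ then no entity of $E'$ (hence no entity of $E$ reaching here) matches $\anEntity$, so $\anEntity\notin E$, and indeed $\anEntity$ reaches a leaf labelled $0$. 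For size, let $g(E',F)$ be the number of nodes of $\texttt{build}(E',F)$; the recurrence $g(E',F)\le (m+1)+\sum_{i=1}^m g(E'_i, F\setminus\{x\})$ together with $m\le|E'|$ and $\sum_i|E'_i|=|E'|$ yields, by induction on $|F|$, a bound of the form $g(E',F)\le 3|E'|\,|F|+1$, i.e. $|\aDecisionTree|=O(|E|\,|\aFeatureSet|)$. The running time obeys the same recurrence up to the $O(|E'|)$ cost of reading the $x$-values of $E'$ and bucketing them, so it is also $O(|E|\,|\aFeatureSet|)$.

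The construction is elementary; the only points needing care are phrasing the invariant precisely enough for the induction to close — in particular accounting for the ``fell off the chain'' leaves, which are exactly what certifies $\anEntity\notin E$ — and checking that the value-test chains do not inflate the size, which is precisely the telescoping $\sum_i|E'_i|=|E'|$ combined with $m\le|E'|$.
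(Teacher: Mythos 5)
Your construction is correct, but it takes a different route from the paper's. The paper builds the tree \emph{incrementally over entities}: starting from the all-rejecting single leaf, it inserts each $\anEntity_{i}$ by following its path in the current tree until a $0$-leaf is reached, then replaces that leaf with a fresh chain of length $2|\aFeatureSet|$ accepting only $\anEntity_i$; the size bound is then the one-line observation $|\aDecisionTree_{i+1}|\le|\aDecisionTree_i|+2|\aFeatureSet|$. You instead recurse \emph{over features}, bucketing the surviving entities by their value on the current feature and simulating the multiway branch by a chain of singleton tests, with the size bound coming from the telescoping $\sum_i|E'_i|=|E'|$ together with $m\le|E'|$. The paper's argument is shorter and its size accounting is trivial, at the cost of a tree with redundant re-tests of already-fixed features (each inserted chain re-tests all of $\aFeatureSet$); your trie has no such redundancy, your invariant makes the correctness induction cleaner, and your stated running time is in fact easier to certify than the paper's, since in the incremental construction a single insertion may traverse several previously inserted chains. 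The only point you gloss over is that bucketing the $x$-values of $E'$ in $O(|E'|)$ time needs a dictionary or an indexable domain rather than comparison sorting, but this is an implementation detail, not a gap.
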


\begin{proof}
    We build the tree iteratively, assuming that  we start from the tree that rejects all entities, i.e. the one with only one node (the root) with label 0. Let $E = \{\anEntity_1, \ldots, \anEntity_k\}$, and assume that $T_i$, for some $0 \leq i < k$, is a tree that accepts all entities $e_1,\ldots,e_i$ and rejects any other one. Then, given $e_{i+1}$ we can follow the path defined in $T$ for $e_{i+1}$ and we will reach a leaf with label 0. We then extend this tree by appending to this leaf a model that accepts only $e_{i+1}$ (which can be straightforwardly built as a path of length $2|\aFeatureSet|$ because all features are categorical). The resulting tree $T_{i+1}$ only accepts entities $e_1,\ldots,e_{i+1}$, and its size is bounded by $|T_{i}| + 2|\aFeatureSet|$.
\end{proof}

Given a \textsc{CNF} formula $\varphi$ and an entity $\anEntity$ it is possible to simplify $\varphi$ considering only those atoms that coincide with the assignment dictated by $\anEntity$. This new \textsc{CNF} formula is useful for characterizing the set of sufficient reasons for $\anEntity$ with respect to $\varphi$.

\begin{definition}[CNF formula restricted to an entity]
    Let $\varphi = \bigwedge_{i=1}^m C_i$ be a CNF formula where each $C_i$ is a clause over $\aFeatureSet$. For any entity $\anEntity \in \entities{\aFeatureSet}$ and clause $C = \bigvee_{j=1}^m l_j$ we define:
    
    \begin{align*}
        C^e = \bigvee_{\substack{j=1 \\ l_j(e) = 1}}^k l_j
    \end{align*}

    Then, we define the model $M$ restricted to $\anEntity$ as $M_e = \bigwedge_{i=1}^m C_i^e$.
\end{definition}

 More precisely, the sufficient reasons for the prediction $\varphi(e)$ will exactly coincide with the set of minimal hitting sets of a certain hypergraph built from $\varphi_e$. This is well-known \cite{liffiton2008algorithms,reiter1987theory} and was used in \cite{audemard2021explanatory} to develop an algorithm for computing the relevant and necessary features for predictions of boolean binary decision trees. In \cite{huang2021efficiently} this idea was further extended for a more general class of classifiers, observing that $\dnf{T}_e$ is the disjunction of all contrastive explanations \cite{marques2023logic} for prediction $T(e)$.
 
 An hypergraph is a tuple $H=(V, E)$ where $V$ is a set of nodes and $E \subseteq \mathcal{P}(V)$ is a set of hyperedges. We define the size of an hypergraph as $|H| = |V| + \sum_{B \in E} |B|$, and the degree of a node $v \in V$ as $\deg{}(v) = |\{B \in E : v \in B\}|$. A hitting set of $H$ is a subset $S \subseteq V$ of its nodes such that for any $B \in E$ it is the case that $S \cap B \neq \emptyset$. A hitting set $S$ is minimal if for all $S' \subset S$ it holds that $S'$ is not a hitting set.

\section{Auxiliary results}\label{sec:auxiliary_results}

We recall the result from \cite{liffiton2008algorithms}\footnote{Which refers to Theorem 4.5 from \cite{birnbaum2003consistent}} and state it in our terms:

\begin{lemma}[Sufficient reasons as Hitting Sets]\label{lemma:suf_reasons_as_hitting_set}

    Let $M \equiv \bigwedge_{i=1}^m C_i$ be a boolean model over $\aFeatureSet$ given as a CNF formula, where each $C_i$ is not tautologically true, and let $\anEntity \in \entities{\aFeatureSet}$ be an entity such that $\aModel(\anEntity) = 1$.
    
    For any clause $C$, we denote as $\texttt{var}(C)$ the set of features from $\aFeatureSet$ that appear in the clause $C$. We can define a hypergraph $H = (\aFeatureSet, E)$ using $\aFeatureSet$ as the set of nodes; and as hyperedges,

    \begin{align*}
        E = \{\texttt{var}(C_i^e) : 1 \leq i \leq m \}
    \end{align*}

    Then, it holds that $S \in \suffReasons{}(M, e)$ if and only if $S$ is a minimal hitting set of $H$. Also, $|H| = |\aFeatureSet| + \sum_{i=1}^m |C_i^e|$.
    
\end{lemma}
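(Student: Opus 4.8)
The plan is to reduce everything to a single equivalence: the family of hitting sets of $H$ coincides \emph{exactly} with $R(\aModel,\anEntity)$, the family of reasons for the prediction $\aModel(\anEntity)$. Once this is established, the claim about sufficient reasons is immediate, since by Definition~\ref{def-ax} the set $\suffReasons(\aModel,\anEntity)$ consists precisely of the $\subseteq$-minimal elements of $R(\aModel,\anEntity)$, and $\subseteq$-minimality of a set inside a family depends only on the family. Two preliminary remarks will be used throughout. First, because $\aModel(\anEntity)=1$ we have $C_i(\anEntity)=1$ for every $i$, so $C_i^{\anEntity}$ is a non-empty clause and $\texttt{var}(C_i^{\anEntity})\neq\emptyset$; hence every hyperedge of $H$ is non-empty. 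Second, every literal $l$ occurring in some $C_i$ is a function of a single feature $\feature(l)$, so $l(e')=l(\anEntity)$ whenever $e'$ agrees with $\anEntity$ on $\feature(l)$; moreover $l$ belongs to $C_i^{\anEntity}$ iff $l(\anEntity)=1$, and in that case $\feature(l)\in\texttt{var}(C_i^{\anEntity})$.

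For the inclusion ``hitting set $\Rightarrow$ reason'' I would take a hitting set $S$ of $H$ and an arbitrary $e'\in\consistWith{\anEntity}{S}$, and verify $\aModel(e')=1$ clause by clause. Fixing $i$, since $S$ meets $\texttt{var}(C_i^{\anEntity})$ there is a feature $\aFeature\in S\cap\texttt{var}(C_i^{\anEntity})$, which is the feature of some literal $l$ of $C_i$ with $l(\anEntity)=1$; as $\aFeature\in S$ and $e'$ agrees with $\anEntity$ on $S$, we get $l(e')=l(\anEntity)=1$, so $C_i(e')=1$. This holds for every $i$, hence $\aModel(e')=1=\aModel(\anEntity)$ and $S\in R(\aModel,\anEntity)$. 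Note this direction does not use the non-tautology hypothesis.

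The converse, ``$S$ not a hitting set $\Rightarrow$ $S\notin R(\aModel,\anEntity)$'', is the main obstacle: I must exhibit a witness $e'\in\consistWith{\anEntity}{S}$ with $\aModel(e')=0$, and the subtlety is that a clause $C_i$ may contain several literals over the same feature, so one cannot independently ``flip'' its literals one at a time (indeed, without the non-tautology hypothesis the statement is false, e.g.\ a single tautological clause makes $\aModel\equiv 1$ so that $\emptyset\in R(\aModel,\anEntity)$ while $H$ has no empty hitting set). The remedy is exactly to use that $C_i$ is falsifiable: choose the clause $C_i$ with $S\cap\texttt{var}(C_i^{\anEntity})=\emptyset$, fix some $\hat\anEntity$ with $C_i(\hat\anEntity)=0$, and define $e'$ by $e'(\aFeature)=\anEntity(\aFeature)$ for $\aFeature\in S$ and $e'(\aFeature)=\hat\anEntity(\aFeature)$ for $\aFeature\notin S$. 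I would then check $C_i(e')=0$ literal by literal: if a literal $l$ of $C_i$ has $\feature(l)\in S$, then $\feature(l)\notin\texttt{var}(C_i^{\anEntity})$ forces $l(\anEntity)=0$, hence $l(e')=l(\anEntity)=0$; and if $\feature(l)\notin S$ then $l(e')=l(\hat\anEntity)=0$ since $C_i(\hat\anEntity)=0$. Therefore $C_i(e')=0$, so $\aModel(e')=0\neq\aModel(\anEntity)$, while $e'\in\consistWith{\anEntity}{S}$ by construction; thus $S\notin R(\aModel,\anEntity)$.

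Combining the two directions yields $R(\aModel,\anEntity)=\{\,S\subseteq\aFeatureSet : S\text{ is a hitting set of }H\,\}$, and taking $\subseteq$-minimal elements on both sides gives that $\suffReasons(\aModel,\anEntity)$ is exactly the set of minimal hitting sets of $H$. The size bound is then immediate from the definition of hypergraph size: $|H|=|\aFeatureSet|+\sum_{B\in E}|B|=|\aFeatureSet|+\sum_{i=1}^{m}|\texttt{var}(C_i^{\anEntity})|\le|\aFeatureSet|+\sum_{i=1}^{m}|C_i^{\anEntity}|$, since collapsing duplicate hyperedges and counting features rather than literals can only decrease the total.
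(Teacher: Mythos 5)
Your proof is correct and follows essentially the same route as the paper's: both directions come down to the clause-by-clause equivalence between reasons and hitting sets of $H$, with the non-tautology hypothesis used only to construct the falsifying witness, and the minimality transfer is immediate in both. If anything, your literal-by-literal verification in the converse direction (choosing $\hat{e}$ falsifying all of $C_i$ rather than just $C_i^e$, and separately handling literals whose feature lies in $S$) is slightly more careful than the paper's, which only argues $C_i^e(e') = 0$ before concluding $M(e') = 0$.
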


\begin{proof}
    Let $S \subseteq \aFeatureSet$. If $S \cap \texttt{var}(C_i^e) = \emptyset$ for some $i$, let $e_i$ be an entity such that $C_i^e(e_i) = 0$ (which exists because of our assumption that the clauses were not tautologically true), and consider the entity $e'$ defined as

    \begin{align*}
        e'(x) = \begin{cases}
            e(x) & x \in \aFeatureSet \setminus \texttt{var}(C_i^e) \\
            e_i(x) & \text{otherwise}
        \end{cases}
    \end{align*}

    Since $C_i^e(e') = 0$ it holds that $M(e') = 0$. Hence, $S\notin \suffReasons{}(M, e)$.

    On the other hand, let's see that if $S$ is a hitting set of $H$ then it must be a reason for $\anEntity$ with respect to model $M$. Let $e'$ be an entity that agrees with $e$ on every feature in $S$. Then, given any clause $C_i^e$ there is some feature $x \in S \cap C_i^e$ and thus there is some literal in $C_i^e$ that is satisfied by $e'$. This implies that $C_i^e(e') = 1$, and because this argument works for any $i$, it must be the case that $M(e') = 1$.
    
\end{proof}

Moreover, it holds that given any hypergraph $H$ one can build a boolean decision tree $\aDecisionTree$ and an entity $e$ such that the minimal hitting sets of $H$ are precisely the sufficient reasons for $e$ with respect to model $T$.

\begin{lemma}[From hitting sets to decision trees]\label{lemma:from_hitting_to_trees}

    Let $H = (V, E)$ be a hypergraph. Then, it is possible to build a boolean binary decision tree $T$ over binary features $V$ and an entity $e$ over $V$ such that the set of minimal hitting sets of $H$ is precisely the set of sufficient reasons $\suffReasons{}(T, e)$.

    Moreover, this construction can be done in $O(|V|^2)$ time and the resulting tree has size $O(|V|^2)$.
    
\end{lemma}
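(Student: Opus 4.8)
The plan is to invert Lemma~\ref{lemma:suf_reasons_as_hitting_set}: encode the hitting-set structure of $H$ into a CNF, realize that CNF as a decision tree, and read the sufficient reasons off it.

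First I would set aside the degenerate case in which some hyperedge of $H$ is empty, since then $H$ has no hitting set at all; otherwise write $E = \{B_1, \dots, B_m\}$. Take the feature set to be $V$ with every feature binary, and let $e \in \entities{V}$ be the all-ones entity. Consider the monotone CNF $\varphi = \bigwedge_{j=1}^{m} \big( \bigvee_{v \in B_j} v \big)$. None of its clauses is tautological, $\varphi(e) = 1$, and restricting the $j$-th clause to $e$ leaves it unchanged (every positive literal is satisfied by the all-ones assignment), so its variable set is exactly $B_j$. Hence Lemma~\ref{lemma:suf_reasons_as_hitting_set} gives that $\suffReasons{}(\varphi, e)$ is precisely the set of minimal hitting sets of $H$. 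Since the set of sufficient reasons of a model depends only on the Boolean function it computes, it now suffices to build a decision tree $\aDecisionTree$ with $\aDecisionTree \equiv \varphi$; then $\suffReasons{}(\aDecisionTree, e) = \suffReasons{}(\varphi, e) = \mathrm{mhs}(H)$, as required.

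To build $\aDecisionTree$ I would chain one gadget per clause. The gadget for $C_j = v_{j,1} \vee \dots \vee v_{j,t_j}$ is a path of nodes testing $v_{j,1}, \dots, v_{j,t_j}$ in order: from the node testing $v_{j,i}$ the branch ``$v_{j,i} = 0$'' leads to the node testing $v_{j,i+1}$, the branch ``$v_{j,t_j} = 0$'' leads to a $0$-leaf (the clause is falsified), and the branch ``$v_{j,i} = 1$'' (the clause is satisfied) leads on to the gadget for $C_{j+1}$; after the gadget for $C_m$ we place a $1$-leaf. Then an entity reaches a $0$-leaf exactly when it falsifies some clause, so $\aDecisionTree \equiv \varphi$; in particular $e$ follows the chain of ``$=1$'' branches to the $1$-leaf.

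The delicate point — and the one I expect to be the real obstacle — is the claimed size and time bound $O(|V|^2)$. Written literally as a tree, every ``$v_{j,i} = 1$'' branch of a gadget needs its own copy of all later gadgets, so the naive chaining duplicates gadgets. The construction must therefore be reorganized — choosing a good order for the clauses and for the variables within a clause, and routing the ``clause satisfied'' branches through re-queries of already-tested features (which a decision tree is permitted to repeat) — so that these copies do not proliferate and the total number of nodes stays $O(|V|^2)$; controlling this duplication while preserving $\aDecisionTree \equiv \varphi$ is the crux, after which correctness is immediate from Lemma~\ref{lemma:suf_reasons_as_hitting_set}. (One might instead hope to apply Lemma~\ref{lemma:trees_sparse_models}, but that produces small trees for models with few accepting entities, which is not the situation here, so the direct gadget construction seems the right route.)
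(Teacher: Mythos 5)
Your setup (all-ones entity, monotone CNF $\varphi=\bigwedge_{j}\bigvee_{v\in B_j}v$, and the observation that $\varphi^e$ has variable sets exactly the $B_j$, so Lemma~\ref{lemma:suf_reasons_as_hitting_set} gives correctness for \emph{any} tree equivalent to $\varphi$) is sound, but the part you flag as ``the crux'' is a genuine gap, and it cannot be closed the way you suggest. No reordering of clauses or variables, and no re-querying of features, will make the gadget chaining small: a decision tree computing a monotone CNF can be forced to have exponentially many nodes. Already for $\bigwedge_{i=1}^{k}(x_{2i-1}\vee x_{2i})$ every $1$-leaf of a correct tree corresponds to a term containing a positive literal from each clause, hence of length at least $k$, covering at most $2^{k}$ of the $3^{k}$ accepting entities; so at least $(3/2)^{k}$ leaves are needed. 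Subtree sharing is what a BDD buys you, not a tree, so the duplication you observe is not an artifact of a naive layout --- it is intrinsic to representing $\varphi$ itself. The fix must change the Boolean function being represented, not the shape of the tree.

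That is exactly what the paper does, via the route you explicitly set aside. It replaces each clause $C_B^{+}=\bigvee_{v\in B}v$ by $C_B^{+}\vee C_B^{-}$ with $C_B^{-}=\bigvee_{v\in V\setminus B}\overline{v}$, and takes $M=\bigwedge_{B\in E}(C_B^{+}\vee C_B^{-})$. Each such clause is falsified by exactly one entity (the one that is $0$ on $B$ and $1$ off $B$), so $\lnot M$ accepts only $|E|$ entities and Lemma~\ref{lemma:trees_sparse_models} \emph{does} apply --- to $\lnot M$, after which one flips the leaf labels. Crucially, restricting to the all-ones entity $e$ deletes every literal of $C_B^{-}$, so $(C_B^{+}\vee C_B^{-})^{e}=C_B^{+}$ and Lemma~\ref{lemma:suf_reasons_as_hitting_set} still recovers the hypergraph $H$ exactly; the padding changes the model but not its sufficient reasons at $e$. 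This is the missing idea in your proposal: engineer the CNF so that its negation is sparse while its restriction to $e$ is unchanged.
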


\begin{proof}

    Fix the entity $e$ as $e(v) = 1$ for all $v \in V$. For any $B \in E$ we define a clause $C_B^+ \equiv \bigvee_{v \in B} v \in \{1\}$ and likewise a clause $C_B^- \equiv \bigvee_{v \in V \setminus B} v \notin \{1\}$. Consider the model

    \begin{align*}
        M = \bigwedge_{B \in E} (C_B^+ \vee C_{B}^-) 
    \end{align*}

    Note that each clause $C_B^+ \vee C_{B}^-$ is satisfied by all entities but one. Therefore, $M$ rejects only $|E|$ entities. Then, $M$ can be represented by a decision tree using Lemma~\ref{lemma:trees_sparse_models}\footnote{More precisely, one can use Lemma~\ref{lemma:trees_sparse_models} to construct $\lnot M$, and then complement all the leaves to get a decision tree representing $M$.}. The constructed tree $T$ satisfies $\dnf{\aDecisionTree} = \bigwedge_{B \in E} (C_B^+ \vee C_B^-)$, and $\dnf{\aDecisionTree}_e = \bigwedge_{B \in E} (C_B^+ \vee C_B^{-})^e = \bigwedge_{B \in E} C_B^+$. By Lemma~\ref{lemma:suf_reasons_as_hitting_set} it holds that the sufficient reasons for $\anEntity$ with respect to $\aModel$ coincide with the minimal hitting sets of the hypergraph $H' = (V, \{\text{var}(C_B^+) : B \in E\})$, but since $\texttt{var}(C_B^+) = B$ it is the case that $H' = H$.
    
\end{proof}

The connection between the sufficient reasons of a decision tree $\aDecisionTree$ and the hitting sets of the hypergraph induced by $\aDecisionTree$ will be useful for proving both upper and lower complexity bounds. Regarding the latter, we provide two simple algorithms for computing minimal hitting sets of hypergraphs.

\begin{lemma}\label{lemma:structured_hitting_set}
    Given a hypergraph $H = (V,E)$ and a subset $W \subseteq V$ of its nodes it is possible to decide if there is a minimal hitting set containing $W$ in $O(|H|\prod_{w \in W} \deg{}(w))$.
\end{lemma}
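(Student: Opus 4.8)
The plan is to rely on the standard characterization of minimal hitting sets through \emph{private witnesses}: a set $S\subseteq V$ is a minimal hitting set of $H$ if and only if $S$ is a hitting set and every $v\in S$ has a \emph{private edge}, i.e.\ an edge $B\in E$ with $S\cap B=\{v\}$. In particular, if $S$ is a minimal hitting set with $W\subseteq S$, then each $w\in W$ has a private edge $B_w$, and since $W\subseteq S$ we must have $B_w\cap W=\{w\}$; moreover $B_w$ is one of the at most $\deg(w)$ edges incident to $w$. The price of committing to $B_w$ as the private edge of $w$ is that $S$ is then forced to avoid every node of $B_w\setminus\{w\}$.

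So first I would enumerate all tuples $(B_w)_{w\in W}$ where $B_w$ is an edge with $B_w\cap W=\{w\}$; there are at most $\prod_{w\in W}\deg(w)$ of them, and this product is $0$ exactly when some $w\in W$ has no such incident edge, in which case no minimal hitting set can contain $W$. Given such a tuple, set $F=\bigcup_{w\in W}(B_w\setminus\{w\})$, and note $W\cap F=\emptyset$ is automatically forced by $B_w\cap W=\{w\}$. I claim that there is a minimal hitting set $S$ of $H$ with $W\subseteq S$ realizing this tuple (i.e.\ with $S\cap B_w=\{w\}$ for all $w$) if and only if every edge $B\in E$ with $B\cap W=\emptyset$ satisfies $B\not\subseteq F$. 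The ``only if'' direction is immediate: such an $S$ hits $B$ at some $v\notin W$, and $v\notin F$ since $v\in S$ and $S\cap F=\emptyset$. For ``if'', consider the reduced hypergraph $H'$ on node set $V\setminus(W\cup F)$ with edge set $\{\,B\setminus F : B\in E,\ B\cap W=\emptyset\,\}$; by hypothesis all these edges are nonempty, each is contained in $V\setminus(W\cup F)$, so this node set is a hitting set of $H'$ and hence $H'$ has some minimal hitting set $S'$. Then $S=W\cup S'$ is a hitting set of $H$ (edges meeting $W$ are covered by $W$; edges disjoint from $W$ are covered by $S'$, using $S'\cap F=\emptyset$), and it is minimal: each $w\in W$ keeps $B_w$ as a private edge because $B_w\subseteq\{w\}\cup F$ and $S'$ is disjoint from $\{w\}\cup F$; and each $v\in S'$ keeps a private edge because its private edge $B\setminus F$ in $H'$ lifts to an edge $B$ with $B\cap W=\emptyset$ and $S\cap B=S'\cap(B\setminus F)=\{v\}$.

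The algorithm then loops over the at most $\prod_{w\in W}\deg(w)$ tuples; for each one it builds $F$ in time $O(\sum_{w}|B_w|)=O(|H|)$ by marking the nodes of $F$ and of $W$ in boolean arrays indexed by $V$, and then scans every edge $B$ once to test whether $B\cap W=\emptyset$ and, if so, whether $B\subseteq F$, in time $O(|B|)$ per edge and $O(|H|)$ overall. It answers ``yes'' as soon as some tuple passes all the tests, and ``no'' if none does. Each tuple thus costs $O(|H|)$, giving the claimed bound $O(|H|\prod_{w\in W}\deg(w))$ (and the answer is trivially ``no'' if some $w\in W$ has $\deg(w)=0$).

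The main obstacle --- essentially the only nonroutine point --- is verifying the ``if'' direction above, namely that gluing an arbitrary minimal hitting set $S'$ of the reduced hypergraph onto $W$ still yields a \emph{minimal} hitting set of the original $H$: one must check both that the chosen private edges $B_w$ survive the addition of the new nodes in $S'$ (which holds because $S'$ is disjoint from $F\supseteq B_w\setminus\{w\}$) and that every node of $S'$ retains a private edge in $H$ (which holds because $S'$ is disjoint from $F$, so intersecting $S'$ with $B$ or with $B\setminus F$ gives the same set). Everything else is bookkeeping with the two boolean arrays.
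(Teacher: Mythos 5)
Your proof is correct and follows essentially the same route as the paper's: characterize minimality via private witness edges, enumerate the at most $\prod_{w\in W}\deg(w)$ choices of witnesses $\{B_w\}_{w\in W}$, and for each choice test in $O(|H|)$ time whether any edge is swallowed by $\bigcup_{w\in W}(B_w\setminus\{w\})$. Your verification that gluing a minimal hitting set of the reduced hypergraph onto $W$ stays minimal is somewhat more detailed than the paper's (which simply takes a minimal hitting set inside $V\setminus F$ and notes it must contain each $w$), but the underlying argument is the same.
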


\begin{proof}
   The algorithm follows from this simple observation: if $S$ is a minimal hitting set and $v \in S$, then there must be some $B \in E$ such that $S \cap B = \{v\}$. Note that otherwise $S \setminus \{v\}$ would also be a hitting set, contradicting the minimality of $S$. Thus, if $S$ is a hitting set such that $W \subseteq S$ it must be the case that for each $w \in W$ there is some $B_w \in E$ with $S \cap B_w = \{w\}$.

    The algorithm consists of exhaustive search considering all possibilities for the choice of $\{B_w\}_{w \in W} \subseteq E$, which are at most $\prod_{w \in W} \deg{}(w)$. Given any choice of $\{B_w\}_{w \in W}$ we need to decide if there is a hitting set included in $V \setminus \left(\bigcup_{w\in W} \left(B_w \setminus \{w\}\right)\right)$, which is the same as checking if $B \nsubseteq \bigcup_{w \in W} \left(B_w \setminus \{w\}\right)$ for all $B \in E$. If this holds, then there is a minimal hitting set of $H$ contained in $V \setminus \bigcup_{w \in W} B_w \setminus \{w\}$, and this minimal hitting set must contain each $w$, since otherwise the hyperedge $B_w$ would not be covered.

    Given a choice of $\{B_w\}_{w \in W}$ we can decide in $O(|H|)$ time whether $V \setminus \left(\bigcup_{w\in W}\left( B_w \setminus \{w\}\right) \right)$ is a hitting set of $H$. Thus, the final complexity is $O(|H|\prod_{w \in W} \deg{}(w))$.

\end{proof}

\begin{lemma}\label{lemma:construct_distinct_hitting_sets}
        Given a hypergraph $H = (V,E)$ and a node $v \in V$ it is possible to decide if there are $k$ different minimal hitting sets containing $v$ in $O(k |H| (\deg{}(v) |V|^{k-1} + |V|))$. 
\end{lemma}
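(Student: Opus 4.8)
The plan is to build the $k$ minimal hitting sets one at a time, each time using Lemma~\ref{lemma:structured_hitting_set} as a subroutine, while forcing the new hitting set to differ from all previously found ones. First I would observe the key structural fact that if $S$ is a minimal hitting set and $v \in S$, then (as in the proof of Lemma~\ref{lemma:structured_hitting_set}) there is some hyperedge $B_v \in E$ with $S \cap B_v = \{v\}$; and moreover, for any other minimal hitting set $S'$ with $S' \neq S$, there must be some node in $S \setminus S'$ or in $S' \setminus S$. I will exploit the former direction: to guarantee $S' \neq S$ it suffices to guarantee that $S'$ omits at least one element of $S$, i.e. $S' \subseteq V \setminus \{u\}$ for some $u \in S \setminus \{v\}$ --- unless $S = \{v\}$, in which case we can equivalently demand that $S'$ \emph{contains} some $u \neq v$, which is automatic once $|S'| \geq 2$; the degenerate case where $\{v\}$ is itself a hitting set is handled separately and trivially.

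The core subroutine I need is: given the current collection $\mathcal{C} = \{S_1,\dots,S_j\}$ of already-found minimal hitting sets (each containing $v$), decide whether there is a minimal hitting set $S$ with $v \in S$ and $S \notin \mathcal{C}$, and if so produce one. To do this, pick for each $S_i \in \mathcal{C}$ a node $u_i \in S_i \setminus \{v\}$ to be excluded (if some $S_i = \{v\}$, that case is dispatched as noted above). There are at most $|V|^{j} \le |V|^{k-1}$ ways to choose the tuple $(u_1,\dots,u_j)$. For each such choice, we form the hypergraph $H'$ obtained from $H$ by deleting the nodes $u_1,\dots,u_j$ and all incident hyperedges-restricted appropriately (keeping only the surviving part of each hyperedge), and ask via Lemma~\ref{lemma:structured_hitting_set} with $W = \{v\}$ whether $H'$ has a minimal hitting set containing $v$; that call costs $O(|H|\deg(v))$. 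A minimal hitting set of $H'$ containing $v$ is a hitting set of $H$ (since every hyperedge of $H$ survives, possibly shrunk, and shrinking only makes it harder to hit, so hitting the shrunk version hits the original) and by construction it avoids every $u_i$, hence differs from every $S_i$; conversely, if some minimal hitting set $S$ of $H$ with $v \in S$ is distinct from all $S_i$, then for each $i$ we may pick $u_i \in S_i \setminus S$ (nonempty because $S$ and $S_i$ are incomparable, both being minimal, so $S_i \not\subseteq S$), and then $S$ is a hitting set of the corresponding $H'$, so some minimal hitting set of $H'$ containing $v$ --- a subset of $S$ that still contains $v$, which exists --- is found. Thus one invocation of this subroutine costs $O(|V|^{k-1} \cdot |H|\deg(v))$, and since we invoke it at most $k$ times (extracting $S_1,\dots,S_k$ in turn, stopping early if it ever fails), the total is $O(k|V|^{k-1}|H|\deg(v))$; absorbing the bookkeeping of updating $\mathcal{C}$ and forming the $H'$ instances (each $O(|H|)$, done $O(|V|^{k-1})$ times per round) gives the claimed $O(k|H|(\deg(v)|V|^{k-1} + |V|))$.

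The main obstacle I expect is getting the correctness of the "exclude one element from each previous set" reduction exactly right --- in particular, verifying that restricting hyperedges (rather than merely deleting them) preserves the hitting-set correspondence in both directions, and handling the incomparability argument that lets us always find a witness $u_i \in S_i \setminus S$. The subtle point is that a minimal hitting set of $H'$ need not be minimal in $H$, but that is fine: we only need to \emph{find} distinct minimal hitting sets of $H$, and any hitting set of $H$ can be shrunk to a minimal one still containing $v$ (shrinking can remove $v$ only if $v$ is redundant, but the Lemma~\ref{lemma:structured_hitting_set} call with $W=\{v\}$ already certifies a minimal hitting set containing $v$), so the objects we extract are genuinely minimal hitting sets of $H$; I would make sure the write-up is careful that "minimal in $H'$ containing $v$" is precisely what Lemma~\ref{lemma:structured_hitting_set} returns and that this is a minimal-hitting-set-of-$H$ after the shrink is folded into that same lemma call. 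A secondary bookkeeping point is bounding the per-round cost of constructing all the $H'$ instances, which is where the additive $|V|$ term (times $k|H|$) in the stated bound comes from.
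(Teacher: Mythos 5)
Your proposal is correct and follows essentially the same route as the paper: iteratively extract the minimal hitting sets one at a time, guessing for each previously found $S_i$ a node $u_i \in S_i$ to exclude together with a witness hyperedge certifying the minimality of $v$, then checking hitting-set-hood in $O(|H|)$ and shrinking to a minimal hitting set. The only cosmetic differences are that you package the witness-hyperedge search as a call to Lemma~\ref{lemma:structured_hitting_set} on a restricted hypergraph rather than inlining it, and that the additive $|V|$ term in the stated bound actually accounts for the final minimization pass (iteratively removing nodes, $O(|H||V|)$ per round) rather than for constructing the restricted instances.
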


\begin{proof}
    Assuming we have computed $k-1$ minimal hitting sets $S_1,\ldots, S_{k-1}$, we show how to compute another one, namely $S_k$. Using the same observation from Lemma~\ref{lemma:structured_hitting_set} we conclude that if $S$ is a minimal hitting set containing $v$, then there must be some hyperedge $B \in E$ such that $S \cap B = \{v\}$. Moreover, if we want a hitting set $S_k$ such that $S_i \neq S_k$ then there must be some node $s_i \in S_i$ such that $s_i \notin S_k$, and this holds for each $1 \leq i \leq k-1$.

    This reasoning implies an exhaustive search algorithm analogous to the one from Lemma~\ref{lemma:structured_hitting_set}. If there is another minimal hitting set $S_k$ that contains $v$ then there must be some $B \in E$ with $v \in B$ and nodes $s_1,\ldots,s_{k-1}$ with $s_i \in S_i$ such that $S_k \subseteq V \setminus \left(\left(B \setminus \{x\}\right) \cup \{s_1,\ldots,s_{k-1}\}\right)$. Therefore, for each choice of $B$ (of which there are $\deg{}(v)$) and of the nodes $s_1,\ldots,s_{k-1}$ (which are at most $|V|^{k-1}$) we can check whether $V \setminus \left(\left( B \setminus \{v\}\right) \cup \{s_1, \ldots, s_{k-1}\} \right)$ is a hitting set, and if it is then we can obtain $S_k$ by iteratively removing nodes until achieving minimality. This last procedure can be implemented in time $O(|H||V|)$, and thus we can find an $S_k$ if it exists in time $O(|H|\deg(v)|V|^{k-1} + |H||V|)$.

    To compute all the minimal hitting sets we need to invoke the previous algorithm $k$ times, and thus we can bound the whole complexity by $O(k|H|(\deg(v)|V|^{k-1} + |V|))$.
\end{proof}

The algorithm from Lemma~\ref{lemma:structured_hitting_set} is not polynomial with respect to the input size, and this situation cannot be improved in general:

\begin{proposition}\label{prop:cover_with_structure_np_complete}

    The problem of deciding, given a graph $G=(V, E)$ and a subset of nodes $W \subseteq V$, whether there is a minimal vertex cover\footnote{In the context of graphs a hitting set is a vertex cover.} $S$ of $G$ such that $W \subseteq S$ is \NPcomplete{}.
    
\end{proposition}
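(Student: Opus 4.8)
The plan is to first place the problem in \NP{}, then prove \NPhard{}ness by a reduction from \problem{sat}, exploiting the classical duality between (inclusion‑)minimal vertex covers and (inclusion‑)maximal independent sets. Note that the minimality requirement is what makes the problem nontrivial: without it, $S = V$ is always a vertex cover containing $W$.

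\textbf{Membership.} A witness is the set $S$ itself: given $S \subseteq V$ one checks in polynomial time that $W \subseteq S$, that $S$ covers every edge, and that $S$ is minimal — the last amounts to verifying, for each $v \in S$, that some edge incident to $v$ has its other endpoint outside $S$, which certifies that $S \setminus \{v\}$ is not a cover. Hence the problem is in \NP{}.

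\textbf{Hardness.} The first step is to reformulate the question: $S$ is a minimal vertex cover of $G$ exactly when $I := V \setminus S$ is a maximal independent set, and $W \subseteq S$ exactly when $I \cap W = \emptyset$; so the problem is equivalent to deciding whether $G$ has a maximal independent set disjoint from $W$. Given a CNF formula over variables $x_1,\dots,x_n$ with clauses $c_1,\dots,c_m$, I would build the graph $G$ with a vertex for each literal $x_i$ and $\overline{x_i}$ joined by an edge (a variable gadget), a vertex for each clause $c_j$, and an edge from $c_j$ to the vertex of each literal occurring in $c_j$; set $W = \{c_1,\dots,c_m\}$. This is clearly polynomial. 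An independent set $I$ with $I \cap W = \emptyset$ contains at most one of $x_i,\overline{x_i}$ for each $i$ by the gadget edge; if in addition $I$ is maximal it contains at least one (otherwise $x_i$ has no neighbor in $I$ — its only neighbors are $\overline{x_i}$ and clause vertices, and no clause vertex lies in $I$ — so $x_i$ could be added), so $I$ encodes a total truth assignment; and maximality further forces each clause vertex $c_j$, which itself cannot lie in $I$ since $c_j \in W$, to have a neighbor in $I$, i.e. some literal occurring in $c_j$ is ``true'', i.e. the assignment satisfies $c_j$. Conversely, from a satisfying assignment let $I$ be the set of literal vertices that are true: it is independent (one per gadget), and dominating because it dominates the complementary literal vertex in each gadget and, by satisfaction, every clause vertex; hence $I$ is a maximal independent set disjoint from $W$, so $V \setminus I$ is a minimal vertex cover containing $W$.

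\textbf{Main obstacle.} There is no deep difficulty here; the crux is designing the gadget so that the single condition ``$I$ is a \emph{maximal} independent set avoiding $W$'' simultaneously forces a \emph{total} assignment and forces every clause to be satisfied, together with the observation that membership in $W$ must translate to \emph{exclusion} from $I$. The rest — the two implications of the reduction (including the small case analysis on whether the literal vertex of $c_j$ found in $I$ is a positive or negative literal) and the polynomial‑time bounds — is routine.
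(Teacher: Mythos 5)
Your proof is correct, but it follows a genuinely different route from the paper's. The paper proves hardness by a two-step reduction: first from \textsc{3-COLORING} to an intermediate problem \textsc{HITTING-SET-FP} (hitting set avoiding forbidden pairs of nodes), and then from \textsc{HITTING-SET-FP} to the target problem by adding one fresh vertex per hyperedge, joined to the members of that hyperedge, and taking $W$ to be the set of these fresh vertices; the correctness argument there also passes through complements of covers, but the complementation is done inside the second reduction rather than up front. You instead begin by rewriting the whole question via the duality ``$S$ is a minimal vertex cover iff $V \setminus S$ is a maximal independent set,'' turning it into ``does $G$ have a maximal independent set disjoint from $W$?'', and then give a single direct reduction from \textsc{SAT} with the standard literal-gadget-plus-clause-vertex construction, putting the clause vertices into $W$. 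Both arguments are sound: your maximality analysis correctly forces a total assignment (a variable with neither literal in $I$ could be added, since its only neighbours are its complement and clause vertices, none of which lie in $I$) and forces every clause vertex to be dominated by a true literal; your \NP{} membership check (each $v \in S$ has an incident edge whose other endpoint is outside $S$) is the right certificate of minimality. What each approach buys: yours is shorter and uses one reduction from the canonical \NPcomplete{} problem, with the complementation reframing doing most of the conceptual work; the paper's route stays in the language of hitting sets and forbidden pairs, which matches the hypergraph machinery used throughout the rest of the paper (sufficient reasons as minimal hitting sets) and isolates \textsc{HITTING-SET-FP} as a reusable intermediate problem.
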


\begin{proof}
    The \NP{} membership is immediate. For the hardness, we show a reduction from \textsc{3-COLORING} to our problem. Nonetheless, for clarity we actually provide a reduction of \textsc{3-COLORING} to another problem we named \textsc{HITTING-SET-FP}, and then a reduction from \textsc{HITTING-SET-FP} to our problem. 
    
    \textsc{HITTING-SET-FP} is the problem of finding a hitting set of a hypergraph when some pairs of nodes are forbidden from being picked together. That is, the problem consists of deciding, given a hypergraph $H=(V, E)$ and some forbidden pairs $F \subseteq V \times V$, whether there is a hitting set $S$ of $H$ such that for every pair $(v_1,v_2) \in F$ it is the case that $\{v_1, v_2\} \nsubseteq S$. It is clear that this problem is in \NP{}.

    We now provide the reduction from \textsc{3-COLORING} to \textsc{HITTING-SET-FP}. Given a graph $G = (V_G, E_G)$ we define a hypergraph $H = (V_H, E_H)$ as

    \begin{align*}
        V_H &= \{v_i : v \in V_G, 1\leq i \leq 3\}\\
        E_H &= \{ \{v_1, v_2, v_3\} : v \in V_G\}\}
    \end{align*}

    with forbidden pairs $F = \{(v_i, w_i ) : vw \in E_G, 1\leq i \leq 3\} \cup \{(v_i,v_j) : v\in V_G, 1 \leq i < j \leq 3\}$. 
    
    Intuitively, each node of $H$ corresponds to an assignment of a node from $G$ to a color. The set of hyperedges enforces that one color must be chosen for each node, and the set of forbidden pairs enforces that nodes joined by an edge must use different colors and that each node must have a unique color. The correctness of the reduction is immediate.

    For the second reduction (from \textsc{HITTING-SET-FP} to our problem), consider a hypergraph $H = (V_H, E_H)$ with forbidden pairs $F$. We define a graph $G=(V_G, E_G)$ as

    \begin{align*}
        V_G &= V_H \cup E_H \\
        E_G &= \{vw : (v,w) \in F\} \cup \{(v, B) : B \in E_H, v \in B\}
    \end{align*}

    $G$ has all the nodes $H$ has, and one extra node for each hyperedge. Edges are put between the two nodes of every forbidden pair and between any node from $V_H$ and the hyperedges to which it belongs.

    We now show that $V_H$ admits a hitting set that does not contain any forbidden pair from $F$ if and only if $V_G$ admits a minimal hitting set containing $E_H$. This implies the correctness of the reduction by picking $W = E_H$.

    Let $S$ be a valid hitting set of $H$. Then each edge $v B \in E_G$ is covered by $V_G \setminus S$ because $B \in E \subseteq V_G \setminus S$. Similarly, for every $(v,w) \in F$ it is the case that either $v$ or $w$ does not belong to $S$ , and thus one of them belongs to $V_G \setminus S$, which implies that the edge $vw$ from $G$ is covered. Therefore, since $V_G \setminus S$ is a vertex cover there is some minimal vertex cover $C \subseteq V_G \setminus S$. We show that $B \in C$ for any $B \in E$: because $S$ is a hitting set, it holds that there is some $s_B \in S \cap B$, and thus if it were the case that $B \notin C$, then the edge $s_B B$ would be left uncovered. We conclude that there is a minimal vertex cover of $V_G$ containing $E_H$.

    For the other direction, let $C$ be a minimal vertex cover of $V_G$ such that $E_H \subseteq C$. Then, we claim that $V_H \setminus C$ is a valid hitting set of $H$. For each $B \in E$ it holds that each edge $vB$ is covered in $V_G$, for every $v \in B$. Since the covering is minimal, there must be some node $s_B \in B$ such that $s_B \notin C$, and thus $s_B \in V_H \setminus C$. This implies that $V_H \setminus C$ is a hitting set. To see that it is valid, consider any forbidden pair $(v, w)$: since $vw$ is an edge from $V_G$ it must be the case that either $v \in C$ or $w \in C$, and thus either $v \notin V_H \setminus C$ or $w \notin V_H \setminus C$. We conclude that $V_H \setminus C$ is a valid hitting set.
\end{proof} 

We presented Lemmas~\ref{lemma:structured_hitting_set} and ~\ref{lemma:construct_distinct_hitting_sets} merely to show that these problems, which are intractable in general (i.e. when either $|W|$ or $k$ is not fixed), may allow for efficient solutions in restricted cases. In case one needed to enumerate hitting sets efficiently these algorithms will not be optimal, and should rather resort to more efficient alternatives  \cite{gainer2017minimal}. 

\section{Main results}\label{sec:main_results}



\subsection{Relevant features}

Consider the following problem:

\defProblem{$\pRelevantFeatures{\aModelClass}$}
    {A model $M \in \aModelClass$ over $\aFeatureSet$, an entity $e \in \entities{\aFeatureSet}$ and a feature $\aFeature \in \aFeatureSet$.}
    {Is $x$ relevant for the prediction $M(e)$?}

In \cite{audemard2021explanatory} this problem was shown to be tractable for $\mathcal{M} = \booleanDecisionTreeClass$. \cite{huang2021efficiently} extended this result for a more general class of trees (in our terms, trees with only categorical features) and later \cite{huang2023feature} showed that it is \NPcomplete{} for \fbddClass{}s. We show that the algorithm from \cite{audemard2021explanatory} can be extended to our more general version of decision trees, without affecting its complexity\footnote{We note that the complexity stated in Proposition~\ref{teo:relevant_features_for_trees} actually does not agree with the one stated in \cite{audemard2021explanatory}[Proposition 6]. We believe the correct bound for their algorithm is $O(|\aFeatureSet||\aDecisionTree|^2)$.}:

\begin{theorem}\label{teo:relevant_features_for_trees}
    The problem $\pRelevantFeatures{\decisionTreeClass}$ can be solved in time $O(|\aFeatureSet||\aDecisionTree|^2)$.
\end{theorem}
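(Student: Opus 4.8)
The plan is to reduce $\pRelevantFeatures{\decisionTreeClass}$ to the question of whether a hypergraph has a minimal hitting set through a prescribed node — a question Lemma~\ref{lemma:structured_hitting_set} already answers efficiently — and to check that assembling that hypergraph from the tree costs only $O(|\aFeatureSet||\aDecisionTree|)$, so that the dominant cost is the single call to Lemma~\ref{lemma:structured_hitting_set}. The first step is binarization: given the $k$-class tree $\aDecisionTree$ and entity $e$, put $c=\aDecisionTree(e)$ and let $\aDecisionTree'$ be $\aDecisionTree$ with every leaf labeled $c$ relabeled to $1$ and every other leaf to $0$. This takes $O(|\aDecisionTree|)$ time, $\aDecisionTree'$ is a Boolean decision tree with $\aDecisionTree'(e)=1$, and since $\aDecisionTree'(e')=1 \iff \aDecisionTree(e')=c=\aDecisionTree(e)$ for every $e'$, we get $R(\aDecisionTree,e)=R(\aDecisionTree',e)$ and hence $\suffReasons{}(\aDecisionTree,e)=\suffReasons{}(\aDecisionTree',e)$; so $x$ is relevant for $\aDecisionTree(e)$ iff it is relevant for $\aDecisionTree'(e)$, and we henceforth work with $\aDecisionTree'$.

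Next I would build $\dnf{\aDecisionTree'}=\bigwedge_p C_p$, ranging over the paths $p$ of $\aDecisionTree'$ ending at a $0$-leaf, simplifying each $C_p$ so that the literals produced by repeated tests of the same feature on a path are merged into $O(1)$ literals (for a numerical feature the conjunction that $C_p$ negates defines an interval), which keeps $|C_p|=O(|\aFeatureSet|)$ and $|\dnf{\aDecisionTree'}|=O(|\aFeatureSet||\aDecisionTree|)$. I would also discard any clause that is tautologically true — these correspond exactly to dead root-to-leaf paths with jointly unsatisfiable branch conditions, are detectable in time linear in the clause size, and removing them alters neither the model nor its sufficient reasons. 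Building $C_p^e$ for each remaining clause and setting $H=(\aFeatureSet,\{\texttt{var}(C_p^e):p\})$ as in Lemma~\ref{lemma:suf_reasons_as_hitting_set}, observe that a sub-disjunction of a non-tautological disjunction is non-tautological, so every $C_p^e$ is non-tautological and the lemma applies (using $\aDecisionTree'(e)=1$), giving $\suffReasons{}(\aDecisionTree',e)=\{\,\text{minimal hitting sets of }H\,\}$. Consequently $x$ is relevant for $\aDecisionTree(e)$ iff $H$ has a minimal hitting set containing $x$, which I decide by invoking Lemma~\ref{lemma:structured_hitting_set} with $W=\{x\}$.

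For the running time: binarizing, constructing and simplifying $\dnf{\aDecisionTree'}$, restricting clauses to $e$, discarding tautologies, and assembling $H$ are all $O(|\aFeatureSet||\aDecisionTree|)$; moreover $|H|=|\aFeatureSet|+\sum_p|C_p^e|=O(|\aFeatureSet||\aDecisionTree|)$, while $\deg{}(x)$ is at most the number of clauses, i.e. at most the number of $0$-leaves of $\aDecisionTree'$, hence $\deg{}(x)=O(|\aDecisionTree|)$. Thus the call to Lemma~\ref{lemma:structured_hitting_set} costs $O(|H|\deg{}(x))=O(|\aFeatureSet||\aDecisionTree|^2)$, which dominates and yields the claimed bound.

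I expect the only genuinely delicate point — and the one where the presence of numerical features actually matters — to be the per-clause simplification: without collapsing repeated numerical tests on a path into a single interval, $|\dnf{\aDecisionTree'}|$ and hence $|H|$ could grow to $\Theta(|\aDecisionTree|^2)$, which would push the final running time to $\Theta(|\aDecisionTree|^3)$. The removal of tautological clauses is the other point requiring attention, though it is harmless; beyond those, the argument is a direct composition of Lemmas~\ref{lemma:suf_reasons_as_hitting_set} and~\ref{lemma:structured_hitting_set}.
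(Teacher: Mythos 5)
Your proof follows the same route as the paper's: binarize the tree by relabeling the leaves, build the entity-restricted CNF, and reduce to Lemma~\ref{lemma:suf_reasons_as_hitting_set} plus a single call to Lemma~\ref{lemma:structured_hitting_set} with $W=\{x\}$, with the same accounting $O(|H|\deg{}(x))=O(|\aFeatureSet||\aDecisionTree|^2)$. The only difference is that you spell out two details the paper leaves implicit --- merging repeated tests of a feature along a path so that $|\dnf{\aDecisionTree}|=O(|\aFeatureSet||\aDecisionTree|)$, and discarding tautological clauses so that the hypothesis of Lemma~\ref{lemma:suf_reasons_as_hitting_set} is met --- and both are handled correctly.
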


\begin{proof}
    We describe the algorithm to solve the problem. Given the decision tree $\aDecisionTree$ and the entity $e$, compute $k = T(e)$. Now, relabel all leaves with label $k$ to 1, and the rest to 0. The obtained tree $T_B$ is now a boolean decision tree such that $T_B(e) = 1$. It can be easily shown that $\suffReasons{}(T, e) = \suffReasons{}(T_B, e)$.

    Using Lemma~\ref{lemma:suf_reasons_as_hitting_set} we can construct a hypergraph $H$ from $\dnf{T_B}^e$ such that $x$ is relevant for $T_B(e)$ if and only if $x$ belongs to a minimal hitting set of $H$. We can decide this using the algorithm from Lemma~\ref{lemma:structured_hitting_set} in time $O(|H|\deg{}(v)) = O(|\aFeatureSet||\aDecisionTree|^2)$.
\end{proof}

As explained in \cite{audemard2021explanatory}, the previous algorithm can actually be used to compute all relevant features in time $O(|\aFeatureSet||\aDecisionTree|^2)$ by removing from $H$ all hyperedges $B$ that contain another hyperedge. In the resulting hypergraph any node belonging to some hyperedge must belong to a minimal hitting set, and thus be a relevant feature.

Many features can be relevant, and as a consequence relevancy may not be a notion strong enough to rank features properly. We now propose two problems that generalize the notion of relevancy, show that are \NPcomplete{} in general, but that they can be solved efficiently for restricted instances.

\begin{proposition}\label{prop:counting_suff_reasons}
    The problem of counting, given a boolean binary decision tree $\aDecisionTree$, an entity $e$ and a feature $x$, the number of sufficient reasons for $T(e)$ containing $x$, is \sharpPcomplete. Also, deciding if there are at least $k$ sufficient reasons for $T(e)$ containing $x$ can be done in time $O(k|\aFeatureSet|^{k+2}|\aDecisionTree|)$.
\end{proposition}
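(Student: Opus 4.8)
The plan is to route everything through Lemma~\ref{lemma:suf_reasons_as_hitting_set}, which identifies the sufficient reasons of a boolean decision tree with the minimal hitting sets of an explicitly computable hypergraph. For \#P membership, we may assume $T(e)=1$ (otherwise replace $T$ by $\lnot T$, which leaves the sufficient reasons unchanged), build $\dnf{T}$ and the associated hypergraph $H$ in polynomial time, and recall that $S\in\suffReasons{}(T,e)$ iff $S$ is a minimal hitting set of $H$. Given a candidate $S\subseteq\aFeatureSet$ one checks in polynomial time that $x\in S$, that $S$ meets every hyperedge of $H$, and that for every $y\in S$ there is a hyperedge $B$ with $S\cap B=\{y\}$; so the family of sets to be counted is polynomial-time decidable, and the counting function lies in \#P.

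For \#P-hardness, I would give a parsimonious reduction from counting the independent sets of a graph, a standard \#P-complete problem. Given $G=(V,E)$, first attach to every vertex $v$ a fresh pendant vertex $v'$ adjacent only to $v$, obtaining $G'$; a short case analysis shows that $I\mapsto I\cup\{v':v\notin I\}$ is a bijection from the independent sets of $G$ onto the maximal independent sets of $G'$, so the minimal vertex covers of $G'$ (the complements of its maximal independent sets) are in bijection with the independent sets of $G$. View the edge set of $G'$ as a hypergraph, whose minimal hitting sets are exactly these minimal vertex covers, and add a fresh node $v^{\ast}$ with the singleton hyperedge $\{v^{\ast}\}$ to force $v^{\ast}$ into every minimal hitting set; call the result $H$. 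Finally apply Lemma~\ref{lemma:from_hitting_to_trees} to $H$ to obtain a boolean binary decision tree $T$ and an entity $e$ with $\suffReasons{}(T,e)$ equal to the set of minimal hitting sets of $H$. Taking $x=v^{\ast}$, the number of sufficient reasons for $T(e)$ containing $x$ equals the number of independent sets of $G$; since every step is polynomial and count-preserving, this proves \#P-hardness and hence \#P-completeness.

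For the fixed-$k$ bound, proceed exactly as above: assume $T(e)=1$, build $\dnf{T}$ and the hypergraph $H=(\aFeatureSet,E)$ from Lemma~\ref{lemma:suf_reasons_as_hitting_set}, so that the sufficient reasons for $T(e)$ containing $x$ are precisely the minimal hitting sets of $H$ that contain the node $x$. Then invoke Lemma~\ref{lemma:construct_distinct_hitting_sets} with $v=x$ to decide whether there are at least $k$ of them. Substituting $|V|=|\aFeatureSet|$ together with the standard size estimates $|H|=O(|\aFeatureSet||\aDecisionTree|)$ and $\deg{}(x)=O(|\aDecisionTree|)$ into the running time $O(k|H|(\deg{}(x)|V|^{k-1}+|V|))$ of that lemma gives the stated bound.

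The part I expect to be delicate is the \#P-hardness. One has to start from a counting problem whose objects are already minimal (plain independent sets, turned into minimal vertex covers by the pendant/complement construction, are the cleanest choice), and one has to enforce that every counted sufficient reason contains the prescribed feature $x$ — the role of the singleton-hyperedge gadget — all while keeping the composition with Lemma~\ref{lemma:from_hitting_to_trees} parsimonious, so that exact counts are transported. By contrast, the \#P membership and the fixed-$k$ algorithm are essentially bookkeeping on top of Lemmas~\ref{lemma:suf_reasons_as_hitting_set} and~\ref{lemma:construct_distinct_hitting_sets}.
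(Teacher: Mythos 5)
Your proof is correct, and its membership and algorithmic parts follow essentially the paper's route: the paper does not discuss membership in \textsc{\#P} at all, and for the decision algorithm it simply invokes Lemmas~\ref{lemma:suf_reasons_as_hitting_set} and~\ref{lemma:construct_distinct_hitting_sets} with $v=x$, exactly as you do. The hardness argument, however, is genuinely different. The paper reduces from counting \emph{minimal vertex covers}, citing its \sharpPcomplete{} status from the literature, and forces the distinguished feature with a minimal gadget: it adds two fresh vertices $v,w$ and the single edge $vw$, so that the minimal vertex covers of the augmented graph that contain $v$ are exactly the sets $C\cup\{v\}$ for $C$ a minimal vertex cover of $G$; Lemma~\ref{lemma:from_hitting_to_trees} then transports the count to sufficient reasons. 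You instead reduce from counting \emph{all} independent sets, using pendant vertices to turn arbitrary independent sets into maximal ones (hence into minimal vertex covers of the auxiliary graph) and a singleton hyperedge $\{v^{\ast}\}$ to force $v^{\ast}$ into every minimal hitting set. Both reductions are parsimonious and correct; yours is more self-contained in that it only presupposes the classical fact that counting independent sets is \sharpPcomplete{}, rather than the less standard result on minimal vertex covers, at the cost of the extra pendant-vertex analysis, and it legitimately exploits the full hypergraph generality of Lemma~\ref{lemma:from_hitting_to_trees} (singleton hyperedges are handled by that construction).

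One caveat on the algorithmic part: plugging $|H|=O(|\aFeatureSet||\aDecisionTree|)$, $\deg{}(x)=O(|\aDecisionTree|)$ and $|V|=|\aFeatureSet|$ into the bound of Lemma~\ref{lemma:construct_distinct_hitting_sets} yields $O\bigl(k|\aFeatureSet|^{k}|\aDecisionTree|^{2}+k|\aFeatureSet|^{2}|\aDecisionTree|\bigr)$, which is not literally the claimed $O(k|\aFeatureSet|^{k+2}|\aDecisionTree|)$; the two expressions are incomparable in general and agree only when $|\aDecisionTree|=O(|\aFeatureSet|^{2})$. So you should not assert that the substitution ``gives the stated bound'' without qualification --- although the discrepancy originates in the paper itself, which asserts the bound without carrying out the computation.
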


\begin{proof}
    The hardness results follows by a reduction from the problem of counting the minimal vertex covers of a graph, which is \sharpPcomplete{} \cite{okamoto2005linear}[Theorem 8]. 
    
    Let $G = (V, E)$ be a graph, and consider $G' = (V \cup \{v, w\}, E \cup \{vw\})$ where $v,w \notin V$. Then, it holds that the number of minimal vertex covers of $G$ coincides exactly with the number of minimal vertex covers of $G'$ containing node $v$. Thus, using Lemma~\ref{lemma:from_hitting_to_trees} we can build a binary boolean decision tree $\aDecisionTree_{G'}$ and an entity $e$ such that the number of minimal vertex covers of $G'$ containing $v$ coincides exactly with the number of sufficient reasons of $\aDecisionTree_{G'}(e)$ containing $v$.

    Regarding the algorithm, it follows immediately using Lemmas~\ref{lemma:suf_reasons_as_hitting_set} and~\ref{lemma:construct_distinct_hitting_sets}.
\end{proof}

\begin{proposition}\label{prop:suff_reasons_with_structure}
    The problem of deciding, given a decision tree $\aDecisionTree$, an entity $\anEntity$ and a subset of features $Y \subseteq \aFeatureSet$, whether there is a sufficient reason $S \in \suffReasons{}(\aDecisionTree, e)$ such that $Y \subseteq S$, is \NPcomplete{}. Also, it can be solved in time $O(|X||T|^{|Y|+1})$.
\end{proposition}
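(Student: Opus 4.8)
The plan is to move the whole question into the hitting-set setting, prove membership in $\NP$ and hardness there, and read off the running time from Lemma~\ref{lemma:structured_hitting_set}. As a preprocessing step (identical to the one in the proof of Theorem~\ref{teo:relevant_features_for_trees}) I would compute $k = \aDecisionTree(e)$, relabel the leaves of $\aDecisionTree$ with value $k$ to $1$ and all others to $0$, obtaining a boolean decision tree $\aDecisionTree_B$ with $\aDecisionTree_B(e)=1$ and $\suffReasons{}(\aDecisionTree,e)=\suffReasons{}(\aDecisionTree_B,e)$; then, applying Lemma~\ref{lemma:suf_reasons_as_hitting_set} to $\dnf{\aDecisionTree_B}$ restricted to $e$, I get a hypergraph $H=(\aFeatureSet,E)$ whose minimal hitting sets are exactly the sets in $\suffReasons{}(\aDecisionTree,e)$, with $|H| = O(|\aFeatureSet||\aDecisionTree|)$ and one hyperedge per root-to-leaf path of $\aDecisionTree_B$ ending at a $0$-leaf.

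Membership in $\NP$ is then immediate: a certificate is a set $S$ with $Y\subseteq S$, and verifying that $S$ is a minimal hitting set of $H$ (equivalently, that $S\in\suffReasons{}(\aDecisionTree,e)$) takes polynomial time. For $\NP$-hardness I would reduce from the problem of Proposition~\ref{prop:cover_with_structure_np_complete}: given a graph $G=(V,E')$ and $W\subseteq V$, decide whether $G$ has a minimal vertex cover containing $W$. Viewing $G$ as a hypergraph all of whose hyperedges have size two, Lemma~\ref{lemma:from_hitting_to_trees} produces in polynomial time a boolean binary decision tree $\aDecisionTree_G$ over feature set $V$ and an entity $e$ with $\suffReasons{}(\aDecisionTree_G,e)$ equal to the family of minimal hitting sets (i.e. minimal vertex covers) of $G$. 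Hence $G$ has a minimal vertex cover containing $W$ if and only if some $S\in\suffReasons{}(\aDecisionTree_G,e)$ satisfies $W\subseteq S$, so taking $Y=W$ yields the reduction.

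For the algorithm I would reuse the hypergraph $H$ built above and invoke Lemma~\ref{lemma:structured_hitting_set} with $W=Y$ to decide whether $H$ has a minimal hitting set containing $Y$, which by construction is exactly the question asked. The cost is $O\bigl(|H|\prod_{w\in Y}\deg(w)\bigr)$; since $|H| = O(|\aFeatureSet||\aDecisionTree|)$ and every node of $H$ has degree at most the number of hyperedges, which is the number of $0$-leaves of $\aDecisionTree_B$ and hence $O(|\aDecisionTree|)$, we get $\prod_{w\in Y}\deg(w) = O(|\aDecisionTree|^{|Y|})$ and a total running time of $O(|\aFeatureSet||\aDecisionTree|^{|Y|+1})$, matching the claim.

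I do not expect a genuine obstacle: both halves are careful bookkeeping on top of Lemmas~\ref{lemma:suf_reasons_as_hitting_set}, \ref{lemma:from_hitting_to_trees} and \ref{lemma:structured_hitting_set}, and the only real content is checking that the constructions compose and that the degree bound gives the stated exponent. The point worth emphasizing is that the exponent $|Y|$ in the running time cannot be removed in general precisely because the hardness part shows the problem is $\NPcomplete$ when $|Y|$ is part of the input, so a bound of the form $O(\mathrm{poly}(|\aDecisionTree|,|\aFeatureSet|))$ independent of $|Y|$ is impossible unless $\textsc{P}=\NP$. One minor care point: a feature in $Y$ not occurring in $\aDecisionTree$ becomes an isolated node of $H$, which by the observation underlying Lemma~\ref{lemma:structured_hitting_set} can never belong to a minimal hitting set, so the algorithm correctly answers ``no'' in that case (and its running time is unaffected).
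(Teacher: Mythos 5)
Your proposal is correct and follows essentially the same route as the paper: \NP{} membership by verifying a candidate sufficient reason, hardness by composing Proposition~\ref{prop:cover_with_structure_np_complete} with Lemma~\ref{lemma:from_hitting_to_trees}, and the algorithm by combining Lemmas~\ref{lemma:suf_reasons_as_hitting_set} and~\ref{lemma:structured_hitting_set}. You in fact spell out more detail than the paper does (including the degree bound that yields the exponent), and you correctly identify the hardness source as Proposition~\ref{prop:cover_with_structure_np_complete}, where the paper's text mistakenly cites Lemma~\ref{lemma:construct_distinct_hitting_sets}.
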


\begin{proof}
    The \NP{} membership is immediate: given a subset of features it is possible to decide whether they are a sufficient reasons \cite{huang2023feature}. For the hardness, by Lemma~\ref{lemma:from_hitting_to_trees} we can reduce the problem of computing a minimal hitting set containing some set of nodes (which is \NPhard{} by Lemma~\ref{lemma:construct_distinct_hitting_sets}) to our problem. 
    
    For the algorithm, apply Lemmas~\ref{lemma:suf_reasons_as_hitting_set} and~\ref{lemma:structured_hitting_set}.
\end{proof}

Even though these problems turn out to be intractable in general, the algorithmic results from Propositions~\ref{prop:counting_suff_reasons} and~\ref{prop:suff_reasons_with_structure} imply that there exist efficient algorithms generalizing slightly the notion of relevancy in decision trees. Meanwhile, for models that are more expressive than $\fbddClass{}$s this is known to be impossible because of the intractability result for $\pRelevantFeatures{\fbddClass{}}$. We observe that the case for \obddClass{}s remains open.

\subsection{Necessary features}

Consider the following problem:

\defProblem{$\pNecessaryFeatures{\aModelClass}$}
    {A model $M \in \aModelClass$ over $\aFeatureSet$, an entity $e \in \entities{\aFeatureSet}$ and a feature $\aFeature \in \aFeatureSet$.}
    {Is $x$ necessary for the prediction $M(e)$?}

It was proven in \cite{audemard2021explanatory} that this problem can be solved in time $O(|\aFeatureSet||M|^2)$ for $\mathcal{M} = \booleanDecisionTreeClass$. Furthermore, in \cite{huang2023feature} it was shown that the problem is tractable for any class of models for which the problem of deciding if a subset of features is a reason (not requiring minimality) is tractable. This implies tractability of the problem for \fbddClass{}s and many other families of circuits such as \ddnnfClass{}s. We extend this result using the following characterization, which highlights the fact that the condition of being necessary is strong:

\begin{theorem}[Characterization of necessary features]\label{teo:characterize_necessary_features}
    Let $\aModelClass$ be a model over features $\aFeatureSet$ and $\anEntity \in \entities{\aFeatureSet}$.
    Then, a feature $x \in \aFeatureSet$ is necessary for $M(e)$ if and only if $M(e) \neq M(e_{x=b})$ for some $b \in \aDomain_x$.
\end{theorem}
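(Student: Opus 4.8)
The plan is to prove both implications directly from the definitions. Two preliminary observations make everything go through. First, $\suffReasons{}(M,e)$ is always non-empty: $\aFeatureSet$ itself lies in $R(M,e)$, and since $R(M,e)$ is a non-empty family of subsets of the \emph{finite} set $\aFeatureSet$, it has inclusion-minimal elements. Second, $R(M,e)$ is upward closed (if $S \subseteq S'$ then $\consistWith{e}{S'} \subseteq \consistWith{e}{S}$), and consequently every reason contains a sufficient reason: given $S \in R(M,e)$, pick a minimal element $S^\ast$ of the finite non-empty family $\{S' \subseteq S : S' \in R(M,e)\}$; any $S'' \subsetneq S^\ast$ is a subset of $S$ and hence, by minimality of $S^\ast$, not a reason, so $S^\ast \in \suffReasons{}(M,e)$.

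For the ($\Leftarrow$) direction, assume $M(e) \neq M(e_{x=b})$ for some $b \in \aDomain_x$, and let $S \in \suffReasons{}(M,e)$ be arbitrary; I will show $x \in S$. Suppose toward a contradiction that $x \notin S$. Then $e_{x=b}$ agrees with $e$ on every feature of $\aFeatureSet \setminus \{x\} \supseteq S$, so $e_{x=b} \in \consistWith{e}{S}$. Since $S$ is in particular a reason for $M(e)$, this forces $M(e_{x=b}) = M(e)$, contradicting the hypothesis. Hence $x \in S$, and as $S$ was an arbitrary sufficient reason, $x$ is necessary for $M(e)$.

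For the ($\Rightarrow$) direction I argue the contrapositive: assume $M(e) = M(e_{x=b})$ for \emph{every} $b \in \aDomain_x$, and show $x$ is not necessary. The key step is that $\aFeatureSet \setminus \{x\}$ is a reason for $M(e)$: any $e' \in \consistWith{e}{\aFeatureSet \setminus \{x\}}$ agrees with $e$ on all features except possibly $x$, so $e' = e_{x = e'(x)}$ and therefore $M(e') = M(e_{x=e'(x)}) = M(e)$ by assumption. By the preliminary observation, $\aFeatureSet \setminus \{x\}$ then contains some sufficient reason $S \in \suffReasons{}(M,e)$ with $S \subseteq \aFeatureSet \setminus \{x\}$, so $x \notin S$ and $x$ fails to be necessary.

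I do not expect a genuine obstacle: the statement is essentially a repackaging of the definitions, and the only points that need a moment of care are the existence of sufficient reasons and the fact that every reason refines to a sufficient reason — both handled once and for all by the finiteness of $\aFeatureSet$ together with upward closure of $R(M,e)$. After that, each direction is a one-line substitution using that $e_{x=b}$ differs from $e$ only at $x$.
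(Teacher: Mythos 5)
Your proof is correct and follows essentially the same route as the paper's: both reduce necessity to the statement that $X \setminus \{x\}$ is not a reason (the paper cites an external proposition for this equivalence, while you derive it directly from finiteness of $X$ and upward closure of $R(M,e)$) and then observe that $X \setminus \{x\}$ being a reason is equivalent to $M(e) = M(e_{x=b})$ for all $b \in \mathcal{D}_x$.
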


\begin{proof}
    Observe that $x$ is necessary for $M(e)$ if and only if $x \in S$ for every $S \in SR(M, e)$, which happens if and only if $X \setminus \{x\}$ is not a reason for $M(e)$ (since otherwise there would be some sufficient reason $S \subseteq \aFeatureSet \setminus \{x\}$)\footnote{This was proven in \cite{huang2023feature}[Proposition 2].}. It holds that $X \setminus \{x\}$ is a reason if and only if $M(e) = M(e_{x=b})$ for every $b \in \aDomain_x$. 
\end{proof}

Thus, it is easy to compute necessary features for any model class that can be evaluated efficiently as long as $|\aDomain_x|$ is small.

\begin{corollary}\label{coro:necessary_complexity_depends_on_domain_x}
    The problem $\pNecessaryFeatures{\aModelClass}$ can be solved in time $O(eval(M) |\aDomain_x|)$, where $eval(M)$ denotes the time complexity required to evaluate the model $M$.

    In particular this implies that the problem is tractable for the class of boolean binary classifiers (which contains the class of \dnfClass{}s and general binary decision diagrams).
\end{corollary}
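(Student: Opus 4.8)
The plan is to derive this directly from the characterization in Theorem~\ref{teo:characterize_necessary_features}. That theorem reduces the question ``is $x$ necessary for $M(e)$?'' to the purely local test of whether flipping the value of $x$ in $e$ can change the prediction, i.e. whether $M(e) \neq M(e_{x=b})$ for some $b \in \aDomain_x$. So the algorithm I would state is: first evaluate $M$ on $e$ to obtain $c = M(e)$; then, iterating over each $b \in \aDomain_x$, build the entity $e_{x=b}$ (which agrees with $e$ everywhere except at coordinate $x$, so it is produced in time linear in the size of $e$) and evaluate $M(e_{x=b})$; answer ``yes'' as soon as one of these evaluations differs from $c$, and ``no'' if none does.

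Correctness is immediate from Theorem~\ref{teo:characterize_necessary_features}. For the running time, there are $|\aDomain_x|$ candidate values of $b$, and for each we perform a single model evaluation at cost $eval(M)$; the cost of assembling $e_{x=b}$ is absorbed, since any evaluation of $M$ must in particular read its input. Hence the total cost is $O(eval(M)\,|\aDomain_x|)$, as claimed. The ``in particular'' clause then follows from two observations about boolean binary classifiers: every feature of such a model is binary, so $|\aDomain_x| = 2$ and the bound collapses to $O(eval(M))$; and for the standard representations of interest — \dnfClass{} formulas and binary decision diagrams (including \fbddClass{}s and \obddClass{}s) — a single evaluation takes time polynomial (indeed linear) in the size of the representation, so the whole procedure is polynomial.

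There is no genuinely hard step here, as this is a direct consequence of the characterization; the only points worth stating carefully are that constructing $e_{x=b}$ does not dominate the cost, and that the phrase ``tractable for the class of boolean binary classifiers'' should be understood relative to a representation admitting efficient evaluation, which is precisely the case for \dnfClass{}s and the binary decision diagram families mentioned.
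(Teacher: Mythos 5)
Your proof is correct and matches the paper's intended argument exactly: the corollary is stated in the paper as an immediate consequence of Theorem~\ref{teo:characterize_necessary_features}, obtained precisely by evaluating $M(e_{x=b})$ for each $b \in \aDomain_x$ and comparing against $M(e)$. Your handling of the ``in particular'' clause (binary domains give $|\aDomain_x|=2$, and \dnfClass{}s and binary decision diagrams admit linear-time evaluation) is also the intended reading.
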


Corollary~\ref{coro:necessary_complexity_depends_on_domain_x} implies that the necessary features for a prediction can be computed efficiently for any model as long as the domain of the features is bounded.

When considering numerical features, Corollary~\ref{coro:necessary_complexity_depends_on_domain_x} is useless. Nonetheless, we observe that for ``comparison-based models'' (i.e. models that only query values of the entity through comparisons of the form $e(x) \leq b$) the non-finite (and even non-discrete) domain of numerical features can be \textit{discretized} using well-known ideas \cite{darwiche2022computation}. For example, in the case of our version of decision trees, it suffices to look for all nodes querying the value of feature $x$ as $x \leq t_1,\ldots,x\leq t_m$ with $t_1\leq t_2 \leq \ldots \leq t_m$ and then consider the finite classes induced by the intervals $[\minvalue_x, t_1], (t_1, t_2], \ldots, (t_{m-1}, t_m], (t_m, \maxvalue_x]$. Such a strategy would also work for any graph-based classifier à la \cite{huang2021efficiently}. Note that the number of classes that have to be considered grows linearly with the number of nodes, and thus we conclude tractability of the problem.

Nonetheless, for our class \decisionTreeClass{} we can compute all necessary features in linear time.

\begin{proposition}\label{prop:linear_time_necessary_trees}
    Given a decision tree $\aDecisionTree$ over features $\aFeatureSet$ and $e \in \entities{\aFeatureSet}$ it is possible to compute all the necessary features for $T(e)$ in $O(|\aFeatureSet| + |T|)$.
\end{proposition}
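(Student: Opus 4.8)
The plan is to use the characterization from Theorem~\ref{teo:characterize_necessary_features}: writing $c = T(e)$, a feature $x$ is necessary for $T(e)$ if and only if $T(e_{x=b}) \neq c$ for some $b \in \aDomain_x$. Now $T(e_{x=b})$ is the label of the leaf reached by running $T$ with the value of $x$ set to $b$ and every other feature set as in $e$; equivalently, it is the label of the leaf reached in the tree $T^{(x)}$ obtained from $T$ by ``bypassing'' every node querying a feature $y \neq x$, replacing it by the child that $e$ selects there. Hence $x$ is necessary iff $T^{(x)}$ contains a leaf whose label is $\neq c$ and which is \emph{reachable}, in the sense that the conjunction of the $x$-literals along its root-to-leaf path is satisfiable over $\aDomain_x$ (the leaf of $e$ itself always witnesses the label $c$, via $b = e(x)$).

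The key structural observation is that a leaf $\ell$ of $T$ survives in $T^{(x)}$ exactly when the root-to-$\ell$ path in $T$ deviates from the choices dictated by $e$ only at nodes querying $x$. Consequently, if $\ell$ is not the leaf of $e$, the first node at which the root-to-$\ell$ path leaves $e$'s classification path is some node $v_i$ on that path with $\feature(v_i) = x$, and $\ell$ lies in the subtree $T_{w_i}$ hanging off $v_i$ through the child $w_i$ not used by $e$. This lets us compute all necessary features with a single depth-first traversal of $T$. During the traversal we maintain, for the current root-to-node path, (i) the set $\Delta$ of features at which the path has already deviated from $e$ (it suffices to track $\Delta$ exactly while $|\Delta| \leq 1$ and merely record that $|\Delta| \geq 2$ otherwise), and (ii) the accumulated constraint that the path imposes on each feature occurring on it — a nested interval for numerical features — updated in $O(1)$ time when we descend an edge and undone when we backtrack, e.g.\ with one stack per feature. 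Whenever we reach a leaf with $\Delta = \{x\}$, label $\neq c$, and a nonempty accumulated constraint on $x$ (intersected with $\aDomain_x$), we mark $x$ as necessary. Computing $c$, evaluating $e$'s choices, and initializing the per-feature state cost $O(|\aFeatureSet| + |T|)$; the traversal visits each node once with $O(1)$ work, so the total running time is $O(|\aFeatureSet| + |T|)$.

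Correctness follows from the structural observation together with Theorem~\ref{teo:characterize_necessary_features}. If $x$ gets marked at a leaf $\ell$ with a feasible value $b$, then $e_{x=b}$ follows precisely the root-to-$\ell$ path — it agrees with $e$ at every non-$x$-node, and $b$ was chosen consistent with the $x$-branches along the path — so $T(e_{x=b})$ is the label of $\ell$, which is $\neq c$, and $x$ is necessary. Conversely, if $x$ is necessary, take $b$ with $T(e_{x=b}) \neq c$; the path of $e_{x=b}$ must first leave $e$'s path at a node querying $x$, stays inside the corresponding $T_{w_i}$ afterwards, and deviates from the $e$-choices only at $x$-nodes, so the traversal explores this exact path and marks $x$ at its terminal leaf.

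The main obstacle is the bookkeeping for categorical features with large domains: there the ``accumulated constraint'' on a feature is not an interval but a set built by intersecting subsets $D \subseteq \aDomain_x$ and their complements along the path, and one must update it in $O(1)$ amortized time while still being able to test nonemptiness without enumerating it. This can be handled by keeping the positive and negative subsets on stacks together with incremental representations of their intersection and union (charging the extra cost to $\sum_{v}|\values(v)|$, which is part of the input), or by first discretizing each categorical domain as in the remark preceding the proposition. Pinning down these data-structure details, and checking carefully the ``deviates only at $x$-nodes'' characterization and the disjointness of the off-path subtrees $T_{w_i}$ along $e$'s path, is where the care lies; the remainder is routine.
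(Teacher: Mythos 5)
Your proof is correct and follows essentially the same route as the paper: both rest on the characterization of Theorem~\ref{teo:characterize_necessary_features} together with the observation that $e_{x=b}$ can deviate from $e$'s classification path only at nodes querying $x$, so every witness leaf lives in an off-path subtree hanging from that path and all features can be checked in one linear-time sweep. The paper packages this as a recursive primitive invoked at each path node with a partition of $\aDomain_x$ into intervals (or subsets), whereas you phrase it as a single DFS tracking the deviation set and accumulated per-feature constraints, but the underlying decomposition — and the delicate bookkeeping for large categorical domains, which you at least flag explicitly while the paper glosses over it — is the same.
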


\begin{proof}
    We assume that $\aDecisionTree$ is a boolean decision tree and that $\aDecisionTree(e) = 1$, which can be done without loss of generality using the techniques employed for the algorithm from Theorem~\ref{teo:relevant_features_for_trees}.

    \newcommand{\primitive}[0]{\texttt{taut}}

    We will need the primitive $\primitive{}_\aDecisionTree(v, e, x, (a, b])$, which: given a node $v$ from a boolean decision tree $\aDecisionTree$, an entity $e$, a numerical feature $x$ and a range $(a, b]$, decides whether $T_v(e_{x=c}) = 1$ for all $c \in (a, b]$. It can be implemented in linear time using recursion as detailed in Figure~\ref{fig:algorithm_for_numerical_features}. We also need an analogous version $\primitive{}_\aDecisionTree(T, e, x, Z)$ where $x$ is a categorical feature and $Z \subseteq \aDomain_x$, which we detail in Algorithm~\ref{fig:algorithm_for_categorial}.

    \begin{figure}
        \centering
        \begin{algorithm}[H]

        \begin{algorithmic}[1]
        
        \REQUIRE $v$ is a node from $\aDecisionTree$, $e$ an entity for $\aDecisionTree$ and $x$ a numerical feature
        
        \IF{$(a, b] = \emptyset$}
            \RETURN 1
        \ELSIF{$v$ is a leaf with label $b$}
            \RETURN $b$
        \ENDIF
        \STATE $w_1 \gets $ left child of $v$
        \STATE $w_2 \gets $ right child of $v$
        \IF{$\feature{}(v) \neq x \wedge l(v)(e) = 1$}
            \RETURN $\primitive{}_{T}(w_1, e, x, (a, b])$
        \ELSIF{$\feature{}(v) \neq x \wedge l(v)(e) = 0$}
            \RETURN $\primitive{}_{T}(w_2, e, x, (a, b])$
        \ELSIF{$\feature{}(v) = x \wedge \val{}(v) \in (a, b)$}
            \RETURN $\primitive{}_{T}(w_1, e, x, (a, \min\{b, \val{}(v)\}]) \wedge \primitive{}_{T}(w_1, e, x, (\max\{a,\val{}(v)\}, b])$
        \ENDIF
        \end{algorithmic}
        \caption{$\primitive{}_\aDecisionTree(v, \anEntity, x, (a, b])$}
    \end{algorithm}
        \caption{Algorithm to decide whether $T_v(x_{x=c}) = 1$ for all $c \in (a, b]$}
        \label{fig:algorithm_for_numerical_features}
    \end{figure}

    \begin{figure}
        \centering
        \begin{algorithm}[H]

        \begin{algorithmic}[1]
        
        \REQUIRE $v$ is a node from $\aDecisionTree$, $e$ an entity for $\aDecisionTree$ and $x$ a categorical feature
         \IF{$Z = \emptyset$}
            \RETURN 1
        \ELSIF{$v$ is a leaf with label $b$}
            \RETURN $b$
        \ENDIF
        \STATE $w_1 \gets $ left child of $v$
        \STATE $w_2 \gets $ right child of $v$
        \IF{$\feature{}(v) \neq x \wedge l(v)(e) = 1$}
            \RETURN $\primitive{}_{T}(w_1, e, x, Z$
        \ELSIF{$\feature{}(v) \neq x \wedge l(v)(e) = 0$}
            \RETURN $\primitive{}_{T}(w_2, e, x, Z$
        \ELSIF{$\feature{}(v) = x \wedge \val{}(v) \in (a, b)$}
            \RETURN $\primitive{}_{T}(w_1, e, x, Z \cap \values{}(v)) \wedge \primitive{}_{T}(w_2, e, x, Z \cap (\aDomain_x \setminus \values{}(v)))$
        \ENDIF
        \end{algorithmic}
        \caption{$\primitive{}_\aDecisionTree(v, \anEntity, x, Z)$}
    \end{algorithm}
        \caption{Algorithm to decide whether $T_v(e_{x=b}) = 1$ for all $b \in Z$.}
        \label{fig:algorithm_for_categorial}
        
    \end{figure}

    Given these functions, it holds that a numerical feature $x$ is necessary for $T(e)$ if and only if $\primitive{}_T(r, e, x, \aDomain_x) = 1$ where $r$ is the root of $\aDecisionTree$. Thus, we can find each necessary feature in $O(|\aDecisionTree|)$. In order to achieve a linear time algorithm to compute all the necessary features we improve on this idea. 
    
    Let $P$ be the path traversed from the root to some leaf when processing entity $e$, and let $v_0,\ldots, v_k$ be the nodes of $P$ where a comparison against $x$ is made, naming $l(v_i) \equiv x \leq t_i$. We define a sequence of intervals $I_0,\ldots,I_k, I_{k+1}$ that partition $(m_x, M_x]$ as follows\footnote{We assume that the interval is open on the left side for simplicity.}:

    \begin{enumerate}
        \item If $l(v_0)(e) = 1$, let $I_0 = (t_0, M_x]$. Otherwise, let $I_0 = (m_x, t_0]$.

        \item For $1 \leq i \leq k$, if $l(v_i)(e) = 1$ then $I_i = ([m_x, M_x] \setminus \cup_{j=0}^{i-1} I_{j-1}) \cap (t_i, M_x]$, and otherwise $I_i = ([m_x, M_x] \setminus \cup_{j=0}^{i-1} I_{j-1}) \cap (m_x, t_{i}]$.

        \item $I_{k+1}$ is equal to $(m_x, M_x] \setminus \cup_{i=0}^k I_i$.
    \end{enumerate}

    Note that $I_0 \ldots I_{k+1}$ are indeed intervals of the form $(a, b]$ because $I_i = \bigcap_{j=0}^{i-1} I_j^c \cap (a_i, b_i]$ where $I_j^c = (m_x, M_x] \setminus I_j$ and $(a_i, b_i]$ is an interval that depends on the value of $l(v_i)(e)$. $I_i$ denotes the interval of values for $x$ such that any entity of the form $e_{x=c}$ for $c \in I_i$ will reach node $v_i$ when being processed by $e$ and then diverge from the path $P$.

    We claim that $\primitive{}_T(r, e, x, (m_x, M_x]) = \bigwedge_{i=0}^k \primitive{}_T(v_i, e, x, I_i)$. To prove this, it is enough to show that $\primitive{}_T(r, e, x, I_i) = \primitive{}_T(v_i, e, x, I_i)$, considering that the intervals partition the range of values for feature $x$. The equality holds because of the definition of the intervals and our previous remarks. We note that there is no need to consider the interval $I_{k+1}$ because those entities whose value for $x$ is in this range $I_{k+1}$ will follow path $P$ until the end, and thus will be evaluated to 1.

    For the categorical features an analogous argument can be made, and thus the call to $\primitive{}_T(r, e, x, \aDomain_x)$ can be decomposed as $\bigwedge_{i=0}^k \primitive{}_T(v_i, e, x, Z_k)$ where the sets $\{Z_i\}_{0 \leq i \leq k+1}$ partition $\aDomain_x$.

    With this decomposition we can now obtain all necessary features in linear time. The algorithm consists in traversing $P$, keeping for each feature the current interval (in the case of numerical features) or subset of values (for the categorical ones) consistent with all entities reaching the current node. Then, at every such node we call $\primitive{}_T$ according to the feature being considered in the node and the current set of consistent values. Note that each call to $\primitive{}_T$ will process different nodes, and thus the whole complexity is $O(|\aFeatureSet| + |T|)$.

\end{proof}

Using similar ideas one can compute all the necessary features for any \fbddClass{} model.

\begin{proposition}\label{prop:necessary_for_fbdds_linear}
    Given an \fbddClass{} $D$ over features $X$ and $e \in \entities{X}$ it is possible to compute all necessary features for $M(e)$ in $O(|\aFeatureSet| + |D|)$.
\end{proposition}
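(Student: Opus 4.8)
The plan is to reduce to the characterization of Theorem~\ref{teo:characterize_necessary_features}. Since an \fbddClass{} works over binary features, that characterization says a feature $x$ is necessary for $D(e)$ if and only if $D(e) \neq D(e_{x = 1 - e(x)})$, i.e. flipping the value of $x$ in $e$ changes the prediction. So it suffices to answer this flip-test for every feature within the target time bound; note that, unlike the decision-tree case, there is no $k$-class-to-boolean preprocessing to do, as an \fbddClass{} is already a boolean binary model.

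First I would observe that only the features labelling nodes on the path $P = v_0, \ldots, v_m$ that $e$ traverses in $D$ can possibly be necessary: if $x$ labels no node of $P$, then $e_{x=b}$ follows the exact same path $P$ for every $b$, so $D(e_{x=b}) = D(e)$. By the read-once property all the features labelling nodes of $P$ are distinct — this is precisely what makes the \fbddClass{} case simpler than Proposition~\ref{prop:linear_time_necessary_trees}. Now fix an internal node $v_i$ of $P$ with $\feature{}(v_i) = x$, and let $w_i$ be the child of $v_i$ that does not lie on $P$ (the ``off-path'' child; if both children of $v_i$ coincide, take that node). I claim $D(e_{x = 1 - e(x)}) = D_{w_i}(e)$: the entity $e_{x=1-e(x)}$ agrees with $e$ on every feature other than $x$, so it follows $P$ up to $v_i$ and then branches to $w_i$; from there it evaluates $D_{w_i}$, and since the read-once property guarantees $x$ does not occur in $D_{w_i}$, this evaluation yields the same value as evaluating $D_{w_i}$ on $e$. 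Hence $x$ is necessary if and only if $D_{w_i}(e) \neq D(e)$ (and when $w_i = v_{i+1}$ this correctly reports ``not necessary'', since every node on $P$ satisfies $D_{v_j}(e) = D(e)$).

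It then remains to compute $D(e)$ together with $D_{w_i}(e)$ for all path nodes $v_i$ in time $O(|\aFeatureSet| + |D|)$. For this I would compute $D_v(e)$ for \emph{every} node $v$ of $D$ by a single dynamic-programming pass in reverse topological order (equivalently, a memoized DFS from the root): a leaf evaluates to its label, and an internal node $v$ with $\feature{}(v) = y$ satisfies $D_v(e) = D_{\texttt{left}(v)}(e)$ if $e(y) = 0$ and $D_v(e) = D_{\texttt{right}(v)}(e)$ otherwise; each node is processed once with $O(1)$ work, for a total of $O(|D|)$. In particular this gives $D(e) = D_r(e)$ for the root $r$. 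Finally I would walk down $P$, marking $\feature{}(v_i)$ as necessary exactly when the precomputed value $D_{w_i}(e)$ differs from $D(e)$, and declaring every other feature non-necessary. Initializing the answer over all of $\aFeatureSet$ costs $O(|\aFeatureSet|)$, the DP costs $O(|D|)$, and the walk costs $O(|P|) = O(|D|)$, which gives the claimed bound.

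The only delicate point is the identity $D(e_{x=1-e(x)}) = D_{w_i}(e)$, and it is exactly here that the read-once property is indispensable: without it, $x$ could be re-queried inside $D_{w_i}$ and the flipped value would again matter, which would break the reduction to a single precomputation of all the $D_v(e)$. Everything else is bookkeeping.
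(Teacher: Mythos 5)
Your proof is correct and follows essentially the same route as the paper: a single dynamic-programming pass computing $D_v(e)$ for every node, followed by a walk along the path $P$ traversed by $e$, using the read-once property to justify that the flipped entity's value equals the precomputed value at the off-path child. One remark in your favor: your necessity criterion ($x$ necessary iff $D_{w_i}(e) \neq D(e)$ for the off-path child $w_i$) is the correct one, whereas the paper's proof states the condition as $D_{\texttt{left}(v)}(e) = D_{\texttt{right}(v)}(e) = D(e)$, which by Theorem~\ref{teo:characterize_necessary_features} characterizes $x$ being \emph{not} necessary --- an apparent inversion that your version fixes.
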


\begin{proof}
    We use the same ideas from Proposition~\ref{prop:linear_time_necessary_trees}. Let $D$ be an \fbddClass{} and $e$ some entity. Through dynamic programming we can compute, for each node $v$ in $D$, the value $D_v(e)$ in $O(|D|)$ time observing that the following relation holds:

    \begin{align*}
        D_v(e) = \begin{cases}
            b & v \text{ is a leaf with label } b\\
            D_{\texttt{left}(v)}(e) & e(\feature{}(v)) = 0\\
            D_{\texttt{right}(v)}(e) & e(\feature{}(v)) = 1 
        \end{cases}
    \end{align*}

    Let $P$ be the path followed in $D$ when processing $e$. Note that a feature $x$ is necessary if and only if $x = \feature{}(v)$ for some $v \in P$ and $D_{\texttt{left}(v)} = D_{\texttt{right}(v)} = D(e)$. To see this, note that there is a unique node $v \in P$ such that $x = \feature{}(v)$ because of the read-once property of \fbddClass{}s. Also, nodes reachable from $v$ will not use the feature $x$ again, and thus $D(e_{x=0}) = D_{\texttt{left}(v)}(e)$ and $D(e_{x=1}) = D_{\texttt{right}(v)}(e)$.
    
    Therefore, after the precomputation step we can traverse the path $P$ and decide in $O(1)$ for each feature whether it is necessary.
\end{proof}

Naturally, the previous algorithm can be used in \obddClass{}s because they are particular cases of \fbddClass{}s. We note that it cannot be straightforwardly generalized to arbitrary diagrams because we required the read-once property for the correctness of the feature necessity decision criterion.

As a closing remark, we mention that the characterization from Theorem~\ref{teo:characterize_necessary_features} is bittersweet, since it implies that the condition of being necessary is particularly strong, and thus maybe suboptimal regarding explainability tasks.

\subsection{Useful features}

Consider the following problem:

\defProblem{$\pUsefulFeatures{\aModelClass}$}
    {A model $M \in \aModelClass$ over $\aFeatureSet$ and a feature $\aFeature \in \aFeatureSet$.}
    {Is $x$ useful for $M$?}

We can characterize useful features as those that are relevant for some entity and at the same time as those that are necessary for some entity.

\begin{theorem}[Characterization of useful features]\label{prop:useful_features_are_necessary_for_someone_and_also_relevant}
    Let $\aModelClass$ be a model over $\aFeatureSet$ and $x \in X$ a feature. Then, the following are equivalent:

    \begin{itemize}
        \item[i)] $x$ is useful for $\aModelClass$.
        \item[ii)] There is some $e \in \entities{\aFeatureSet}$ such that $x$ is necessary for $M(e)$.
        \item[iii)] There is some $e \in \entities{\aFeatureSet}$ such that $x$ is relevant for $M(e)$.
    \end{itemize}
\end{theorem}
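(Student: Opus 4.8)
The plan is to establish the three-way equivalence by proving i) $\Leftrightarrow$ ii) and then ii) $\Rightarrow$ iii) $\Rightarrow$ i). The first equivalence is essentially for free: unfolding the definition, $x$ is useful for $M$ exactly when there exist $\anEntity \in \entities{\aFeatureSet}$ and $b \in \aDomain_x$ with $M(\anEntity) \neq M(\anEntity_{x=b})$. By the characterization of necessary features (Theorem~\ref{teo:characterize_necessary_features}), for a fixed entity $\anEntity$ the condition ``$\exists b \in \aDomain_x :\ M(\anEntity)\neq M(\anEntity_{x=b})$'' is precisely the assertion that $x$ is necessary for $M(\anEntity)$. Quantifying over $\anEntity$ gives i) $\Leftrightarrow$ ii) immediately.

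For ii) $\Rightarrow$ iii) I would first record the routine observation that $\suffReasons{}(M,\anEntity)$ is never empty: the full set $\aFeatureSet$ is always a reason, since $\consistWith{\anEntity}{\aFeatureSet} = \{\anEntity\}$, so $R(M,\anEntity)$ is a nonempty subfamily of the finite lattice $\mathcal{P}(\aFeatureSet)$ and hence has a minimal element. Then, if $x$ is necessary for $M(\anEntity)$ it belongs to \emph{every} sufficient reason (Definition~\ref{def-nf}), in particular to at least one, so $x$ is relevant for $M(\anEntity)$ — with the same witnessing entity $\anEntity$. This nonemptiness remark is genuinely needed here, since necessity would hold vacuously (but relevance would fail) if $\suffReasons{}$ were empty.

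For iii) $\Rightarrow$ i), suppose $x$ is relevant for $M(\anEntity)$, witnessed by a sufficient reason $\aSufReason \in \suffReasons{}(M,\anEntity)$ with $x \in \aSufReason$. By minimality of $\aSufReason$, the set $\aSufReason \setminus \{x\}$ is not a reason, so there is an entity $\anEntity' \in \consistWith{\anEntity}{\aSufReason \setminus \{x\}}$ with $M(\anEntity') \neq M(\anEntity)$. Now define $\anEntity'' = \anEntity'_{x = \anEntity(x)}$: it still agrees with $\anEntity$ on $\aSufReason \setminus \{x\}$ (overwriting only the $x$-coordinate does not disturb those features) and it agrees with $\anEntity$ on $x$ by construction, so $\anEntity'' \in \consistWith{\anEntity}{\aSufReason}$; since $\aSufReason$ is a reason, $M(\anEntity'') = M(\anEntity)$. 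Combining, $M(\anEntity') \neq M(\anEntity) = M(\anEntity'') = M(\anEntity'_{x=\anEntity(x)})$, so taking the entity $\anEntity'$ and the value $b = \anEntity(x) \in \aDomain_x$ exhibits $x$ as useful for $M$.

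The main thing to get right is this last step: the witness for usefulness is not $\anEntity$ itself but the ``bad'' entity $\anEntity'$ with the original value of $x$ re-inserted, and one has to verify that this patch preserves consistency with $\anEntity$ on \emph{all} of $\aSufReason$ (not just on $\aSufReason\setminus\{x\}$) so that the reason property of $\aSufReason$ applies. Everything else is definition-chasing together with the finiteness-and-nonemptiness observation about the family of reasons.
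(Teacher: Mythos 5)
Your proposal is correct and follows essentially the same route as the paper: i)$\Leftrightarrow$ii) via Theorem~\ref{teo:characterize_necessary_features}, ii)$\Rightarrow$iii) because necessary features are relevant, and iii)$\Rightarrow$i) by taking the entity $e'$ witnessing that $S\setminus\{x\}$ is not a reason and patching its $x$-coordinate back to $e(x)$. Your explicit remark that $\suffReasons{}(M,e)$ is nonempty (so that necessity does not hold vacuously) is a small but welcome addition that the paper leaves implicit.
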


\begin{proof}
    $i)$ is equivalent to $ii)$ because of Theorem~\ref{teo:characterize_necessary_features}. $ii)$ implies $iii)$ because every necessary feature is also relevant.

    To prove that $iii)$ implies $i)$ let $e \in \entities{\aFeatureSet}$ be an entity such that $x$ is relevant, and let $S \in SR(M, e)$ be a sufficient reason such that $x \in S$. Then, since $S$ is minimal, $S \setminus \{x\}$ is not a reason, and thus there is some entity $e' \in \consistWith{e}{S \setminus\{x\}}$ such that $M(e') \neq M(e)$. Note that since $S$ is a reason it must be the case that $e'(x) \neq e(x)$. 
    
    It holds that $M(e'_{x=e(x)}) = M(e) \neq M(e')$, and therefore, we conclude that $x$ is useful.

\end{proof}

Let $\pEquiv{\aModelClass}$ be the problem of deciding, given two models $M_1,M_2 \in \mathcal{M}$, whether $M_1 \equiv M_2$. We observe that this problem is related to $\pUsefulFeatures{\aModelClass}$:

\begin{proposition}\label{prop:useful_reduces_b2b_to_equivalence}
    Let $\aModelClass$ be a class of models closed by conditioning. Then, $\overline{\pUsefulFeatures{\aModelClass}} \leq_p \pEquiv{\aModelClass}$. 

    Let $\aModelClass$ be a class of models closed by disjoint disjunction. Then, $\pEquiv{\aModelClass} \leq_p \overline{\pUsefulFeatures{\aModelClass}}$.
\end{proposition}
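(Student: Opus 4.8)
The plan is to prove the two reductions separately, using Theorem~\ref{teo:characterize_necessary_features} (the characterization of necessary features) together with the closure hypotheses.

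For the first reduction, $\overline{\pUsefulFeatures{\aModelClass}} \leq_p \pEquiv{\aModelClass}$: I would observe that $x$ is \emph{not} useful for $M$ precisely when $M(e) = M(e_{x=b})$ for every entity $e$ and every $b \in \aDomain_x$; equivalently (using the remark right after the definition of useful feature, or unfolding directly), $M \equiv M_{x=b}$ for every $b \in \aDomain_x$. So given an instance $(M,x)$ of $\overline{\pUsefulFeatures{\aModelClass}}$, fix any $b_0 \in \aDomain_x$ and output the pair $(M, M_{x=b_0})$. Since $\aModelClass$ is closed under conditioning, $M_{x=b_0} \in \aModelClass$ and can be computed in polynomial time, so this is a valid reduction. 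Correctness: if $x$ is not useful then $M \equiv M_{x=b}$ for \emph{all} $b$, in particular $M \equiv M_{x=b_0}$. Conversely, if $M \equiv M_{x=b_0}$ then for every entity $e$ we have $M(e) = M_{x=b_0}(e) = M(e_{x=b_0})$; but $M(e_{x=b_0}) = M((e_{x=b_0})_{x=b_0})$ and more to the point, for any entity $e'$, writing $e' = e_{x=b}$ for the appropriate $e$ with $e(x)=b$, one gets $M(e') = M(e'_{x=b_0})$, hence all entities differing only in coordinate $x$ receive the same label, so $x$ is not useful. (I should be slightly careful here: $M \equiv M_{x=b_0}$ says $M(e) = M(e_{x=b_0})$ for all $e$, which means every entity has the same label as its $x$-coordinate-reset-to-$b_0$ version, and since this holds for the reset version too, all entities in a common $x$-fiber agree — this is exactly non-usefulness.)

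For the second reduction, $\pEquiv{\aModelClass} \leq_p \overline{\pUsefulFeatures{\aModelClass}}$: given $M_1, M_2 \in \aModelClass$ over $\aFeatureSet$, introduce a fresh binary categorical feature $x \notin \aFeatureSet$ and form $M = (M_1 \wedge x) \vee (M_2 \wedge \overline{x})$, which by the disjoint-disjunction closure lies in $\aModelClass$ and is polynomial-time computable. Then $M$ restricted to entities with $x = 1$ behaves as $M_1$, and restricted to $x = 0$ behaves as $M_2$. I would argue: $x$ is not useful for $M$ iff $M(e) = M(e_{x=b})$ for all $e$ and $b$, which (since the only effect of flipping $x$ is switching between the $M_1$-branch and the $M_2$-branch on the $\aFeatureSet$-part of $e$) is equivalent to $M_1(e') = M_2(e')$ for all entities $e'$ over $\aFeatureSet$, i.e. $M_1 \equiv M_2$. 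Output $(M, x)$; then $M_1 \equiv M_2$ iff $x$ is not useful for $M$.

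The main obstacle I anticipate is bookkeeping around the freshly added feature and the precise translation between ``$M \equiv M_{x=b}$ for all $b$'' and the entity-wise definition of usefulness — nothing deep, but one must state carefully that $M = (M_1 \wedge x) \vee (M_2 \wedge \overline{x})$ genuinely satisfies $M(e) = M_1(e)$ when $e(x)=1$ and $M(e)=M_2(e)$ when $e(x)=0$ (restricting attention to the $\aFeatureSet$-coordinates), and that the disjointness of the two disjuncts via the literal $x$ is what makes this clean. The closure conditions do exactly the work of keeping the constructed models inside $\aModelClass$ and computable in polynomial time, so those hypotheses are used precisely once each.
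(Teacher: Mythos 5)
Your proposal is correct and follows essentially the same route as the paper: the first reduction maps $(M,x)$ to the pair $(M, M_{x=b_0})$ using closure under conditioning, and the second maps $(M_1,M_2)$ to $((M_1 \wedge x)\vee(M_2 \wedge \overline{x}), x)$ using closure under disjoint disjunction. Your extra care in checking that equivalence with a single $M_{x=b_0}$ already forces all entities in a common $x$-fiber to agree is a detail the paper leaves implicit, but the argument is the same.
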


\begin{proof}
    For the first part, observe that $x$ is not useful for a model $M$ if and only if $M = M_{x=b}$ for any $b \in \aDomain_x$. For the second part, note that $M_1 \equiv M_2$ if and only if $x$ is not useful for the model $(M_1 \wedge x) \vee (M_2 \wedge \overline{x})$.
\end{proof}

Observe that $\pUsefulFeatures{\aModelClass}$ is in \textsc{coNP} in general because we can decide if $x$ is not useful by guessing an entity $e$ and a value $b \in \aDomain_x$. Together with Proposition~\ref{prop:useful_reduces_b2b_to_equivalence} this implies the $\coNP$-completeness of the problem $\pUsefulFeatures{\dnfClass{}}$. Similarly, it implies tractability for the classes $\decisionTreeClass{}$ and $\obddClass{}$. Finally, for classes such as $\fbddClass{}$ or $\ddnnfClass{}$ the complexity remains unknown because the complexity of both $\pEquiv{\fbddClass{}}$ and $\pEquiv{\ddnnfClass{}}$ is unknown.

If $x$ is useful then there is some entity for which it is necessary. Moreover, we can assign an ``importance score'' to a feature $x$ by considering the number of entities for which it is necessary. Note that this makes sense only for models with only categorical features, since otherwise the number of entities could be infinite.

This scoring idea can be related to the one considered in \cite{bertossi2020causality} based on the notion of \textit{counterfactual cause} \cite{halpern2005causes}. In their context, a counterfactual cause for an entity $e$ on the prediction of a boolean classifier $M$ such that $M(e) = 1$ is a feature $x$ alongside a value $b$ such that $M(e_{x=b}) = 0$. Our proposed score assigns importance to a feature proportional to the number of entities admitting a counterfactual cause based on that feature.

This score can be computed through a reduction to model counting for models closed by conditioning, negation and conjunction. From now on, given a boolean model $M$ we denote by $CT(M)$ the number of entities that $M$ accepts.

\begin{proposition}\label{prop:counting_for_classes_of_models}
    Let $M \in \aModelClass$ be a categorical boolean model whose model class is closed by conditioning, negation and conjunction. Then, the value $|\{e \in \entities{\aFeatureSet}: x \text{ is necessary for } M(e)\}|$ can be computed using 2 calls to model counting for class $\aModelClass$.     
\end{proposition}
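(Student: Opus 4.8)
The plan is to turn the counting question into two model-counting queries by building, from $M$, the two boolean models that isolate the entities on which $x$ is necessary, separated according to the value of $M$ on the entity. The key enabling fact is the characterization in Theorem~\ref{teo:characterize_necessary_features}: $x$ is necessary for $M(e)$ iff $M(e)\neq M(e_{x=b})$ for some $b\in\aDomain_x$. Splitting on whether $M(e)=1$ or $M(e)=0$, this condition becomes: either $M(e)=1$ and $M(e_{x=b})=0$ for some $b$, or $M(e)=0$ and $M(e_{x=b})=1$ for some $b$. (Since $M$ is categorical, the entity space is finite, so $CT$ of any such model is well defined.)

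To express the two ``for some $b$'' statements as models I would define
\[
  M^{\forall}_x \;=\; \bigwedge_{b\in\aDomain_x} M_{x=b}, \qquad
  M^{\exists}_x \;=\; \bigvee_{b\in\aDomain_x} M_{x=b} \;=\; \lnot\!\!\bigwedge_{b\in\aDomain_x}\!\! \lnot M_{x=b}.
\]
Closure under conditioning places each $M_{x=b}$ in $\aModelClass$, and then closure under conjunction and negation places $M^{\forall}_x$ and $M^{\exists}_x$ in $\aModelClass$ as well (the second one via De Morgan, using $|\aDomain_x|$ negations plus a conjunction). By construction $M^{\forall}_x(e)=1$ exactly when $M(e_{x=b})=1$ for every $b$, so $\lnot M^{\forall}_x(e)=1$ exactly when some $b$ makes $M(e_{x=b})=0$; dually $M^{\exists}_x(e)=1$ exactly when some $b$ makes $M(e_{x=b})=1$. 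Combining this with the case split above, $x$ is necessary for $M(e)$ iff $e$ is accepted by $(M\wedge \lnot M^{\forall}_x)\vee(\lnot M\wedge M^{\exists}_x)$.

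The two disjuncts are mutually exclusive — the first forces $M(e)=1$, the second forces $M(e)=0$ — so the count of entities accepted by this model is simply $CT(M\wedge\lnot M^{\forall}_x)+CT(\lnot M\wedge M^{\exists}_x)$, and each of the two models $M\wedge\lnot M^{\forall}_x$ and $\lnot M\wedge M^{\exists}_x$ again lies in $\aModelClass$ by the same three closure properties; this is the required computation with exactly two calls to model counting for $\aModelClass$. I expect the only delicate point — and it is a minor one — to be the bookkeeping that guarantees every intermediate object stays inside $\aModelClass$ and that the construction is polynomial: forming $M^{\forall}_x$ and $M^{\exists}_x$ costs $|\aDomain_x|$ conditioning operations and an $|\aDomain_x|$-fold conjunction, so the reduction is polynomial whenever $|\aDomain_x|$ is (as it is for categorical models with explicitly given domains), while the number of model-counting oracle calls is $2$ regardless.
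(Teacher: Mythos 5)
Your proof is correct and follows essentially the same route as the paper: both invoke Theorem~\ref{teo:characterize_necessary_features}, split on the value of $M(e)$, and reduce to exactly two model-counting calls on models assembled from $\{M_{x=b}\}_{b\in\aDomain_x}$ using the assumed closure under conditioning, negation and conjunction. The only difference is that you count the necessary entities directly via the disjoint pair $M\wedge\lnot\bigwedge_b M_{x=b}$ and $\lnot M\wedge\bigvee_b M_{x=b}$, whereas the paper counts the complement (entities where $x$ is not necessary, via $CT(\bigwedge_b M_{x=b})$ and $CT(\bigwedge_b \lnot M_{x=b})$, to be subtracted from $|\entities{\aFeatureSet}|$) --- your direct version is, if anything, slightly cleaner and sidesteps the misplaced negation in the paper's displayed formula.
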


\begin{proof}
    We will count the number of entities for which $x$ is not necessary. By Theorem~\ref{teo:characterize_necessary_features} $x$ is not necessary for $e$ if and only if $k = M(e) = M(e_{x=b})$ for all $b \in \aDomain_x$. Thus, the number of entities for which $x$ is not necessary is $CT(\bigwedge_{b \in \aDomain_x} M_{x=b}) + CT(\lnot \bigwedge_{b \in \aDomain_x} M_{x=b})$ 
\end{proof}

\begin{corollary}\label{coro:count_necessary_decision_trees}
    Given a boolean decision tree $\aDecisionTree \in \booleanDecisionTreeClass$ and a feature $x$, it is possible to compute the number of entities for which $x$ is necessary in $O(|T|^2)$.
\end{corollary}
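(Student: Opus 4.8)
The plan is to read this off directly from Proposition~\ref{prop:counting_for_classes_of_models} applied to the class $\booleanDecisionTreeClass$. As noted in Section~\ref{sec:definitions}, $\booleanDecisionTreeClass$ is closed under conditioning and conjunction, and it is trivially closed under negation (flip every leaf label), so the hypotheses of that proposition are met. Since every feature of $\aDecisionTree$ is binary we have $\aDomain_x = \{0,1\}$, so $\bigwedge_{b\in\aDomain_x}\aDecisionTree_{x=b}$ is simply $\aDecisionTree_{x=0}\wedge\aDecisionTree_{x=1}$, and, following the proof of Proposition~\ref{prop:counting_for_classes_of_models}, the number of entities for which $x$ is \emph{not} necessary equals $CT(\aDecisionTree_{x=0}\wedge\aDecisionTree_{x=1}) + CT(\lnot\aDecisionTree_{x=0}\wedge\lnot\aDecisionTree_{x=1})$. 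The quantity we want is then $2^{|\aFeatureSet|}$ minus this sum, so it remains only to bound the cost of these two model-counting computations.

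First I would bound the sizes of the objects involved. Conditioning a decision tree on $x=b$ short-circuits every node querying $x$ to the appropriate child, which costs $O(|\aDecisionTree|)$ and produces a tree of size at most $|\aDecisionTree|$; negating a tree is a cost-free relabelling of its leaves. The conjunction of two decision trees $\aDecisionTree_1,\aDecisionTree_2$ is obtained by replacing each $1$-labelled leaf of $\aDecisionTree_1$ by a fresh copy of $\aDecisionTree_2$, costing $O(|\aDecisionTree_1|\,|\aDecisionTree_2|)$ and yielding a tree of size $O(|\aDecisionTree_1|\,|\aDecisionTree_2|)$. Hence both $\aDecisionTree_{x=0}\wedge\aDecisionTree_{x=1}$ and $\lnot\aDecisionTree_{x=0}\wedge\lnot\aDecisionTree_{x=1}$ are built in $O(|\aDecisionTree|^2)$ time and have size $O(|\aDecisionTree|^2)$, so we are left with two model-counting queries, each on a decision tree of size $O(|\aDecisionTree|^2)$.

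Finally I would recall that $CT(\cdot)$ is computable in time linear in the size of a decision tree: the entity sets following the distinct consistent root-to-leaf paths that end at a $1$-leaf form a partition of the accepting set, and a path along which $d$ distinct features are queried with mutually consistent comparisons is followed by exactly $2^{|\aFeatureSet|-d}$ entities; thus a single depth-first traversal that maintains the set of features currently fixed on the path, pruning any subtree rendered unreachable by a contradictory repeated query, outputs $CT$. Applying this to the two trees above gives the claimed $O(|\aDecisionTree|^2)$ bound (treating arithmetic on the $O(|\aFeatureSet|)$-bit counts as unit cost, as is standard). The one delicate point is exactly this last step: grafting copies of $\aDecisionTree_2$ onto $\aDecisionTree_1$ can make a feature occur twice along a path of the product tree, so the model-counting traversal must detect and discard the inconsistent paths, and one must check that the consistent $1$-paths genuinely partition the accepting set; once that is settled the rest is bookkeeping.
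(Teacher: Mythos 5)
Your proposal is correct and follows essentially the same route as the paper: instantiate Proposition~\ref{prop:counting_for_classes_of_models} for boolean decision trees, build the conjunction trees by grafting a copy of one tree onto the $1$-leaves of the other in $O(|\aDecisionTree|^2)$ time and size, and finish with linear-time model counting (the care you take with repeated features along paths of the product tree is precisely what the cited linear-time counting procedure handles). One small point in your favour: your second term $CT(\lnot\aDecisionTree_{x=0}\wedge\lnot\aDecisionTree_{x=1})$ is the correct one, whereas the formula printed in the proof of Proposition~\ref{prop:counting_for_classes_of_models}, namely $CT(\lnot\bigwedge_{b}M_{x=b})$, is evidently a typo for $CT(\bigwedge_{b}\lnot M_{x=b})$, as the paper's own $k$-class generalization confirms.
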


\begin{proof}
    Given two decision trees $T_1,T_2$, the model $T_1 \wedge T_2$ can be represented by a tree of size $O(|T_1||T_2|)$ appending to each leaf with label 1 of $T_1$ a copy of $T_2$. Moreover, model counting in trees can be done in linear time \cite{darwiche2002knowledge}. Thus, in this case the algorithm from Proposition~\ref{prop:counting_for_classes_of_models} can be implemented in time $O(|T|^2)$.
\end{proof}

We remark that Proposition~\ref{prop:counting_for_classes_of_models} can be generalized for $k$-class models as long as it is possible to ``booleanize'' the $k$-class model $M$ by identifying one class $c$ as 1 and the others as 0. If the classes of $M$ are $\{0,\ldots, k-1\}$ and $M^c$ for $0 \leq c < k$ denotes the booleanized version of $M$ where $c$ is identified with $1$, then the number of entities for which $x$ is not necessary can be computed as $\sum_{i=0}^{k-1} CT(\bigwedge_{b \in \aDomain_x} M^c_{x=b})$.

\section{Experiments}\label{sec:experiments}

In this section we experiment with three different datasets: the California Housing Dataset, the Bike Sharing Demand Dataset and the Adult Income Dataset\footnote{All of them are available in Kaggle.}. We aim to understand whether the ranking induced by our scoring scheme is consistent with the importance ranking of the features of these well-known datasets.

To test this hypothesis we train different models for each dataset and compute the scores for all the features. We will train decision trees in order to use the efficient algorithm from Corollary~\ref{coro:count_necessary_decision_trees}. The ground truth for each dataset (i.e. the true feature importance ranking) is obtained by analyzing the different reports from the Kaggle users as well as some related papers \cite{yu2023multivariate,xu2024research,gold2024statistical}. These datasets were chosen precisely because of the abundance of studies involving them, as well as the fact that they have between 10k and 50k entries. We will also compare the ranking induced by our score with the one induced by the SHAP-score using the \texttt{shap} python library. 

All the experiments are available in our repository\footnote{\url{https://github.com/Andial66/FeatureUsefulness}}.

\paragraph{Preparing the datasets}

Our scoring scheme is well-defined when all features are categorical, and thus we pre-process all datasets to ensure this. We will consider using 3, 4, 5, and 6 bins for each numerical feature, and to discretize them we will use the \texttt{KBinsDiscretizer} from \texttt{sklearn} with the \texttt{uniform} strategy. We detail the process for each dataset:

\begin{enumerate}
    \item \textbf{California Housing Dataset}: all features are numerical, and thus we discretize all of them.
    \item \textbf{Bike Sharing Dataset}: The features \texttt{season}, \texttt{yr}, \texttt{holiday}, \texttt{workingday} and \texttt{weathersit} are already categorical and thus are not modified. The rest of the features are discretized. We note that some of these remaining features are also categorical (such as \texttt{hr}) but we still discretize them using smaller bins to ensure that the algorithm from Corollary~\ref{coro:count_necessary_decision_trees} computes the score in a matter of seconds\footnote{All the experimental results were obtained in less than 10 minutes using a standard laptop.}.
    \item \textbf{Adult Income Dataset}: The features \texttt{race} and \texttt{sex} are categorical and we do not modify them. The other ones are discretized using the chosen number of bins, even if they are already categorical.
\end{enumerate}

The train-test split is made taking 20\% of the dataset as test.

\begin{figure}
    \centering
    \includegraphics[width=\linewidth]{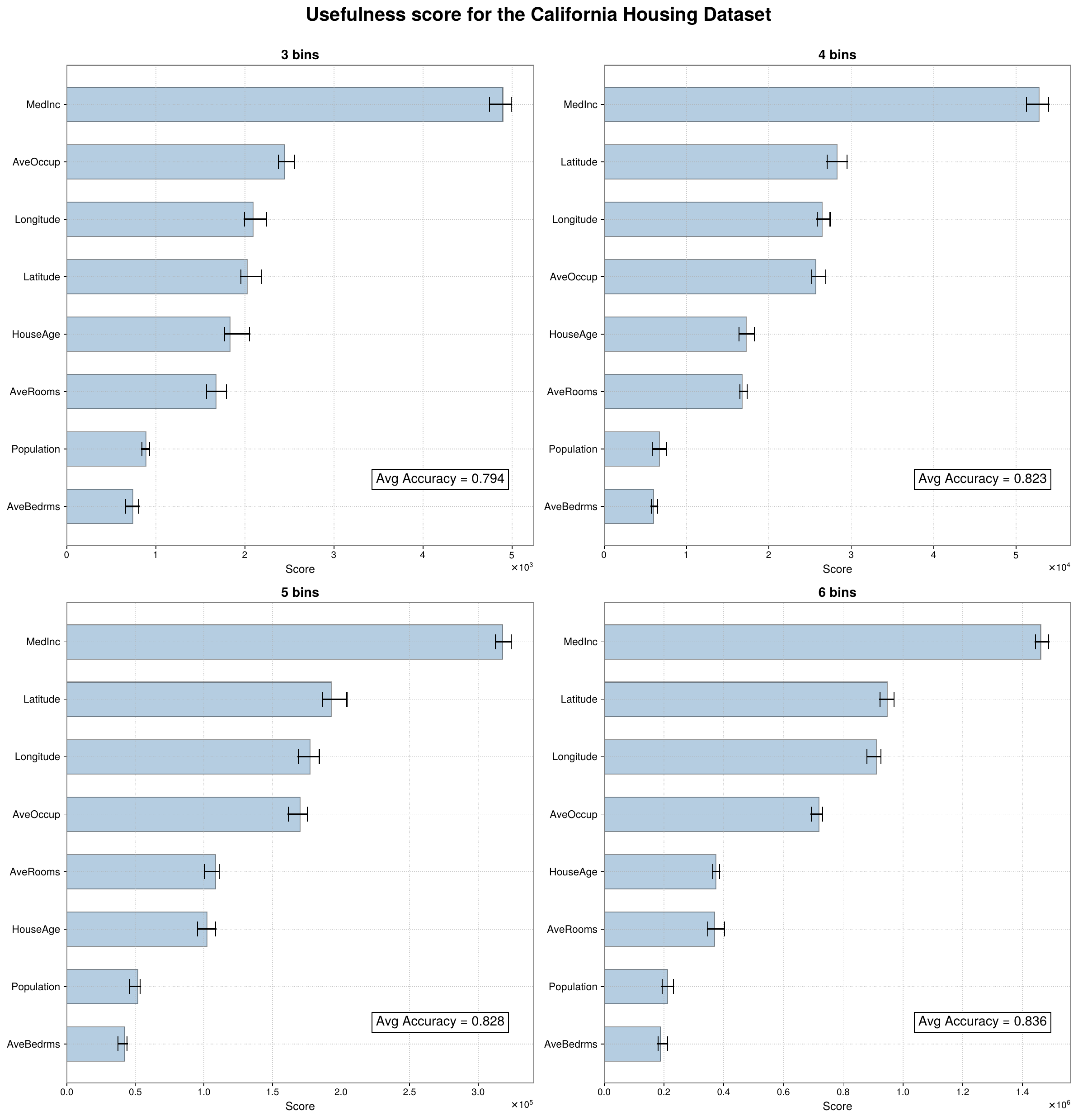}
    \caption{Results for the California Housing Dataset. For each number of bins we train 20 models and show the average score of each feature across all of them, alongside the Q1 and Q3 quartiles. We also display the average accuracy of the models.}
    \label{fig:california_dataset_results}
    
\end{figure}

\paragraph{Ground truth}

We state the ground truth considered for each dataset:

\begin{enumerate}
    \item \textbf{California Housing Dataset}: the most important feature from this dataset is \texttt{MedInc} (Median Income), while the location, deduced from the \texttt{longitude} and \texttt{latitude}, is often understood to be the second most relevant one. \texttt{HouseAge} is usually understood to be the third most influential feature. Among the least relevant ones we have \texttt{Population} and \texttt{AveBedrms}. 
    \item \textbf{Bike Sharing Dataset}: the most important feature is \texttt{hr} (hour of the day), while other influential ones are \texttt{temp} (temperature) and \texttt{hum} (humidity). Features like \texttt{weekday} or \texttt{holiday} are considered to be the least relevant.
    \item \textbf{Adult Income Dataset}: features like \texttt{education-num}, \texttt{capital-gain} and \texttt{relationship} are good predictors. On the other hand, \texttt{fnlwgt} (final weight), \texttt{race} and \texttt{education} are not as relevant.
\end{enumerate}

\paragraph{The models}

We train decision trees for each dataset using the \texttt{DecisionTreeClassifier} from the \texttt{sklearn} library. More precisely, for each bin size we train 20 trees and compute the average score across the different models. To regularize the trees and avoid overfitting we fix the number of leaves as $100 \times \#bins$ for the California Housing Dataset and Bike Sharing Dataset, and $150 \times \#bins$ for the Adult Income Dataset (the difference is due to the fact that the Adult Income Dataset has $\sim$30k entries while the other ones have less than 20k).

\begin{figure}
    \centering
    \includegraphics[width=\linewidth]{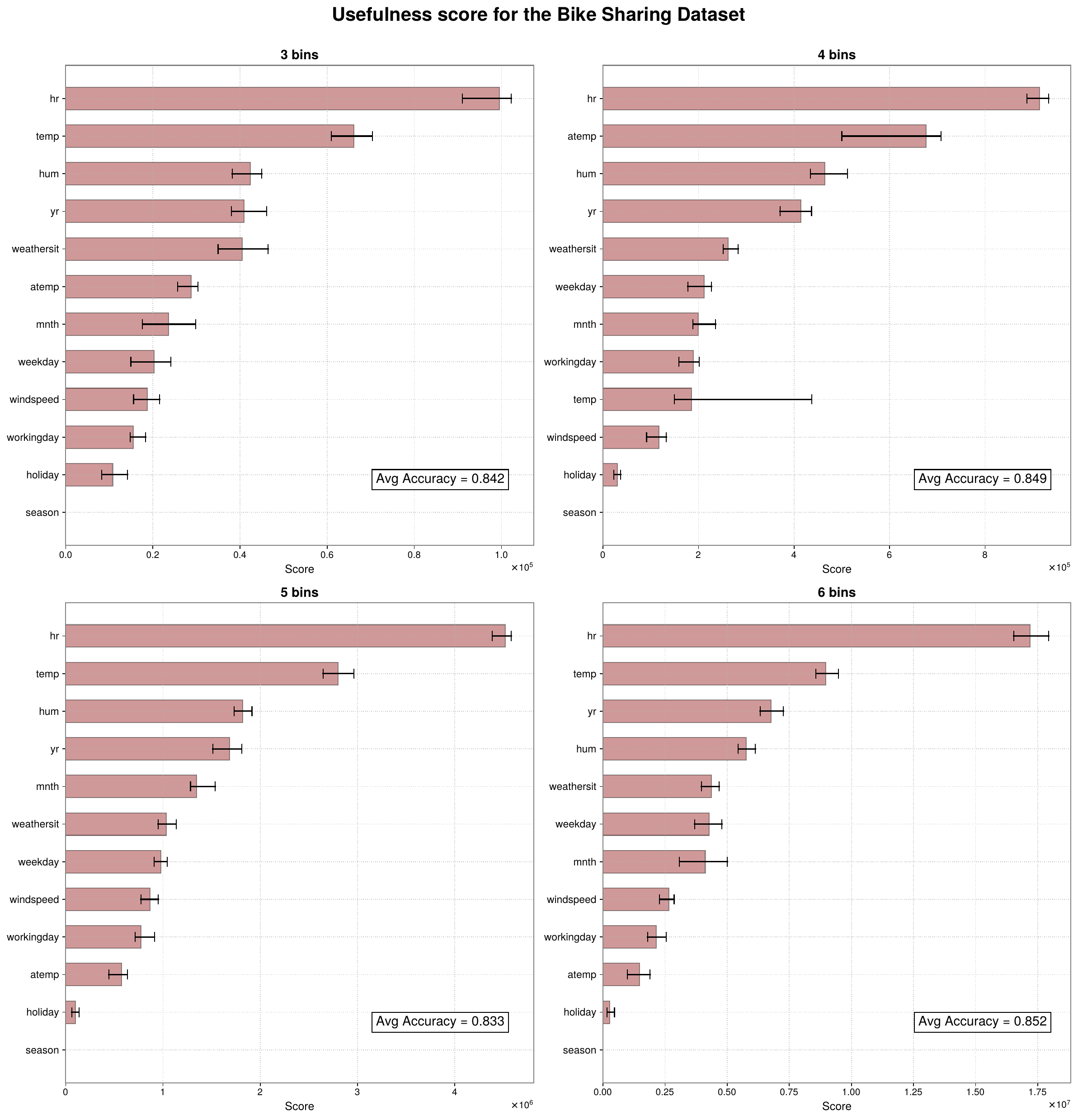}
    \caption{Results for the Bike Sharing Dataset. The displayed plots are analogous to the ones from Figure~\ref{fig:california_dataset_results} but for this dataset.}
    \label{fig:bike_sharing_dataset_results}
    
\end{figure}

\paragraph{The SHAP-scores} To compute the SHAP-score for each feature we compute the average of the SHAP values for each entity in the training set, as is proposed by the \texttt{shap} framework. The experimentation involving the comparison between these scores and the usefulness score only considers the datasets binarized with 6 bins, and builds 20 models for each dataset. For each model we compute both rankings, and check the size of the intersection of the top-1, top-3, top-5 and top-7 features.

\paragraph{Results}

Our results can be seen in Figures~\ref{fig:california_dataset_results}, \ref{fig:bike_sharing_dataset_results} and \ref{fig:adult_income_dataset_results}. We observe that the values in the x-axis have different values for each number of bins because depending of the number of bins there are more or less entities.

For the case of the California Housing Dataset we see that the ranking induced by the usefulness score coincides in most cases with the ground truth, specially when considering the most and less influential feature. We note that for the case of 3 bins the feature \texttt{AveOccup} is ranked second, contradicting what we expected. We believe this is an issue raised by the categorization of the \texttt{longitude} and \texttt{latitude} variables: when they are discretized into three bins in an uniform manner the feature is not granular enough to distinguish whether the location corresponds to a wealthy neighborhood. Thus, when we increase the number of bins these features gain importance.

\begin{figure}
    \centering
    \includegraphics[width=\linewidth]{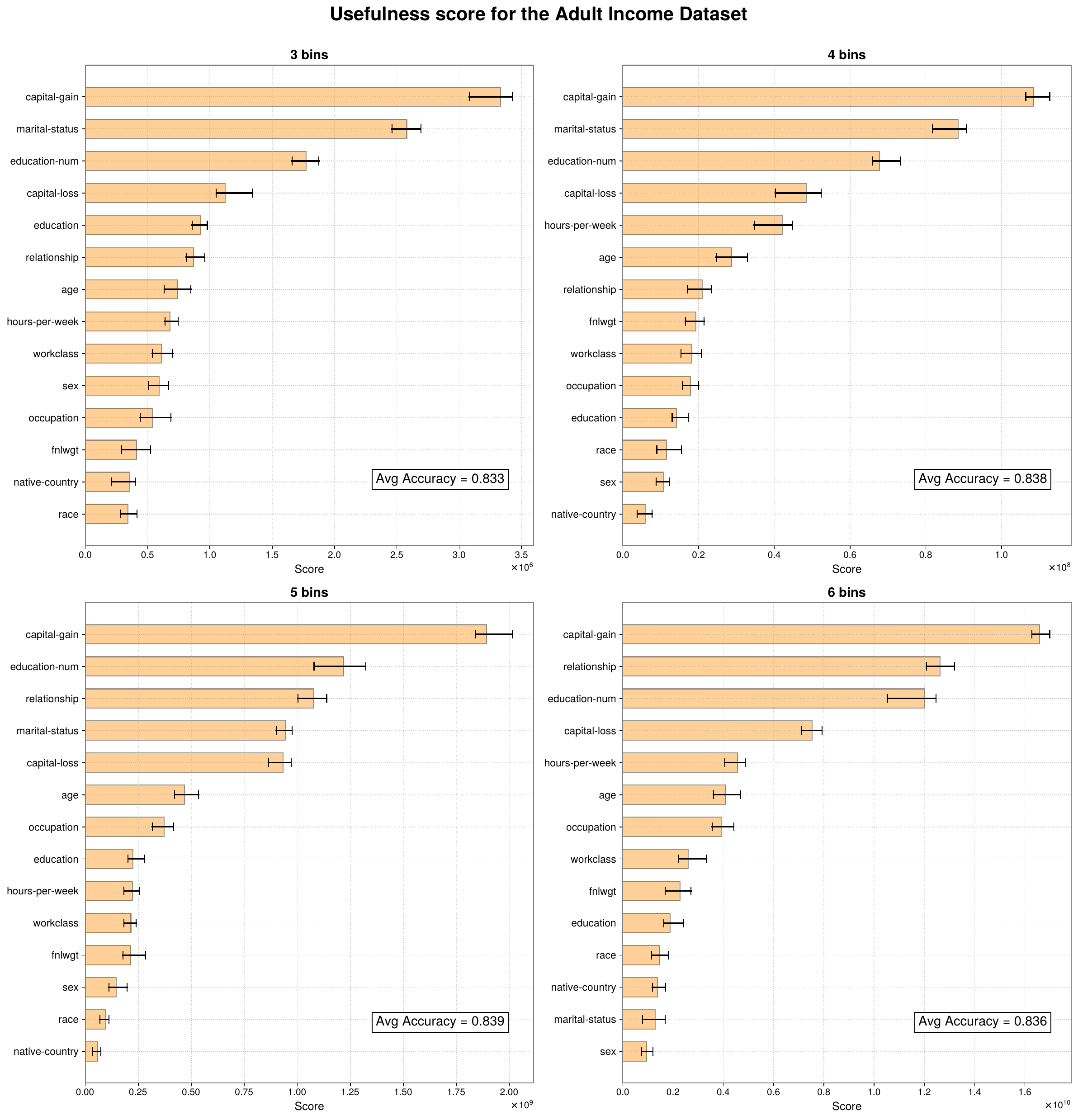}
    \caption{Results for the Adult Income Dataset. The displayed plots are analogous to the ones from Figure~\ref{fig:california_dataset_results} but for this dataset}
    \label{fig:adult_income_dataset_results}
    
\end{figure}

Regarding the Bike Sharing Dataset we note that again our score correlates strongly with the ground truth for all numbers of bins, specially when looking at the most relevant feature. We note that the feature \texttt{atemp} (perceived temperature) that was ranked second for the case of 4 bins actually encodes information correlated with the \texttt{temp} variable that we expected to be better ranked. Moreover, the scores for these two features have a high variance (as seen by the quantiles) and thus it is likely that some models learn to use \texttt{atemp} while others use \texttt{temp}, but in both cases our score assigns importance to the information related to the temperature in general.

As for the Adult Income Dataset we again see that the induced ranking correlates with the ground truth when considering the most informative feature. Observe that the three features \texttt{capital-gain}, \texttt{education-num} and \texttt{relationship} are ranked top 3 in the models with 5 and 6 bins. Meanwhile, in the case of 3 and 4 bins we note that \texttt{marital-status} seems to have taken the place from \texttt{relationship}. We believe that this unexpected behavior is explained using the previous arguments: when few bins are used, the feature \texttt{relationship} might not be as informative; and moreover the feature that took its place is probably correlated with it. Note that among the least informative features we have \texttt{race} and \texttt{fnlwgt} as expected.

Finally, in Table~\ref{tab:shap_vs_useful} we see the comparison between the rankings induced by the usefulness score and the shap-scores. In average they do not differ a lot, specially when considering the simpler datasets (California and Bike Sharing). We note that for the Adult Income Dataset the rankings never match the top-feature: shap always prioritizes \texttt{relationship}, while the usefulness score prioritizes \texttt{capital-gain}, Nonetheless, they usually coincide in the top-3 ranking. Also, notice that based on our literature review \texttt{capital-gain} is the most important feature, and thus the usefulness score might be the one picking the best feature instead of shap. 

\begin{table}[h!]
\centering
\caption{Average intersection between the ranking induced by the usefulness score and the SHAP-scores.}
\label{tab:shap_vs_useful}
\begin{tabular}{lcccc}
    \toprule
     & top-1 & top-3 & top-5 & top-7 \\
    \midrule
    California   & 1    & 3    & 4.7  & 6.7  \\
    Bike Sharing & 1    & 2.7  & 3.9  & 5.2  \\
    Adult Income & 0    & 2.85 & 3.7  & 6.25 \\
    \bottomrule
\end{tabular}
\end{table}

We remark that the computation of the usefulness score is extremely fast using the simple algorithm from Corollary~\ref{coro:count_necessary_decision_trees}, and after our experiments the ranking induced by it seems qualitatively competitive with the SHAP-scores, while also capturing intuitive knowledge from the three datasets we tested. We believe this provides evidence for the practical utility of the scoring scheme.

\section{Conclusions}\label{sec:conclusions}

In this work we extended previous results regarding the computation of relevant and necessary features, and also proposed new generalizations for the notion of relevancy. Moreover, we defined a global notion of feature importance, and showed that it is related to the local ones induced by relevancy and necessity. We studied many problems involving these parameters, proving them to be intractable for complex classes of models in some cases (such as detecting \textit{usefulness} in \dnfClass{}s) but also finding tractable restrictions (for example, detecting \textit{necessity} in \dnfClass{}s). For the tractable cases we described our algorithms with precision, allowing for an easy implementation. In addition, some of them run in linear time, and thus are highly efficient.

Regarding feature relevancy, we extended the results from \cite{audemard2021explanatory} and \cite{huang2022tractable} considering more general decision trees, similar to those studied in \cite{darwiche2022computation}. We also proposed two generalizations of the problem of computing relevant features, which we showed to be intractable in general (Propositions~\ref{prop:counting_suff_reasons} and~\ref{prop:suff_reasons_with_structure}) but solvable in the case of decision trees under certain hypotheses. Moreover, to attain the hardness results we proved hardness for a common vertex cover problem problem which, as far as we know, has not been studied from the point of view of computational complexity before (Lemma~\ref{lemma:construct_distinct_hitting_sets}). We note that the problem of computing relevant features for slightly more expressive models --such as \obddClass{}s -- is still open, since \cite{huang2021efficiently} only showed hardness for \fbddClass{}s.

On the topic of feature necessity, we extended the results of \cite{huang2023feature}, showing that computing necessary features is feasible for all reasonable models (i.e. those allowing evaluation in polynomial time) with categorical features (Corollary~\ref{coro:necessary_complexity_depends_on_domain_x}). Furthermore, we provided linear time algorithms for the case of decision trees with categorical and numerical features, as well as for \fbddClass{}s (Propositions~\ref{prop:linear_time_necessary_trees} and~\ref{prop:necessary_for_fbdds_linear}). We left open the case of extending this linear time algorithms for arbitrary diagrams, or rather to consider restricted families of circuits such as \ddnnfClass{}s.

Finally, we introduced the notion of feature \textit{usefulness}, and showed how it is related to relevancy and necessity (Theorem~\ref{prop:useful_features_are_necessary_for_someone_and_also_relevant}). We showed that the problem of computing useful features is related to the problem of detecting whether two models are equivalent (Proposition~\ref{prop:useful_reduces_b2b_to_equivalence}), and in particular this implied intractability for expressive models such as \dnfClass{}s, and tractability for simple ones such as decision trees and \obddClass{}s. The case for \fbddClass{}s and \ddnnfClass{}s remains open. We also considered measuring usefulness as a counting problem, and provided general algorithms that were efficient in the particular case of decision trees (Proposition~\ref{prop:counting_for_classes_of_models} and Corollary~\ref{coro:count_necessary_decision_trees}). Lastly we performed experiments for our usefulness-based scoring system using three well-known publicly available datasets, where we verified that our scores fairly correspond with both the ground truth and other popular scoring systems such as the SHAP-score.

\bibliographystyle{plain}
\bibliography{biblio}

\end{document}